\newtheorem{theorem}{Theorem}[section]
\newtheorem{lemma}[theorem]{Lemma}
\newtheorem{remark}[theorem]{Remark}
\newtheorem{proposition}[theorem]{Proposition}
\newtheorem{definition}[theorem]{Definition}
\begin{document}
	
%\linenumbers

\begin{frontmatter}

%% Title, authors and addresses

%% use the tnoteref command within \title for footnotes;
%% use the tnotetext command for theassociated footnote;
%% use the fnref command within \author or \affiliation for footnotes;
%% use the fntext command for theassociated footnote;
%% use the corref command within \author for corresponding author footnotes;
%% use the cortext command for theassociated footnote;
%% use the ead command for the email address,
%% and the form \ead[url] for the home page:
%% \title{Title\tnoteref{label1}}
%% \tnotetext[label1]{}
%% \author{Name\corref{cor1}\fnref{label2}}
%% \ead{email address}
%% \ead[url]{home page}
%% \fntext[label2]{}
%% \cortext[cor1]{}
%% \affiliation{organization={},
%%             addressline={},
%%             city={},
%%             postcode={},
%%             state={},
%%             country={}}
%% \fntext[label3]{}

\title{Metric spaces of walks and Lipschitz duality on graphs}

%% use optional labels to link authors explicitly to addresses:
%% \author[label1,label2]{}
%% \affiliation[label1]{organization={},
%%             addressline={},
%%             city={},
%%             postcode={},
%%             state={},
%%             country={}}
%%
%% \affiliation[label2]{organization={},
%%             addressline={},
%%             city={},
%%             postcode={},
%%             state={},
%%             country={}}

\author{R. Arnau, A. González Cortés, E.A. Sánchez Pérez and S. Sanjuan} %% Author name

%% Author affiliation
\affiliation{organization={Instituto Universitario de Matemática Pura y Aplicada,
	Universitat Politècnica de València},%Department and Organization
            addressline={Camino de Vera s/n}, 
            city={Valencia},
            postcode={46022}, 
            state={Comunitat Valenciana},
            country={Spain}}

%% Abstract
\begin{abstract}
%% Text of abstract
We study the metric structure of walks on graphs, understood as Lipschitz sequences. To this end, a weighted metric is introduced to handle sequences, enabling the definition of distances between walks based on stepwise vertex distances and weighted norms. We analyze the main properties of these metric spaces, which provides the foundation for the analysis of weaker forms of instruments to measure relative distances between walks: proximities. We provide some representation formulas for such proximities under different assumptions and provide explicit constructions for these cases. The resulting metric framework allows the use of classical tools from metric modeling, such as the extension of Lipschitz functions from subspaces of walks, which permits extending proximity functions while preserving fundamental properties via the mentioned representations. Potential applications include the estimation of proximities and the development of reinforcement learning strategies based on exploratory walks, offering a robust approach to Lipschitz regression on network structures.
\end{abstract}

\begin{keyword}
%% keywords here, in the form: keyword \sep keyword
graph \sep walk \sep  distance \sep Lipschitz function \sep duality
%% PACS codes here, in the form: \PACS code \sep code

%% MSC codes here, in the form: 
\MSC 26A16 \sep 05C38
%% or \MSC[2008] code \sep code (2000 is the default)

\end{keyword}

\end{frontmatter}

%% Add \usepackage{lineno} before \begin{document} and uncomment 
%% following line to enable line numbers
%% \linenumbers

%% main text
%%

%% Use \section commands to start a section

\section{Introduction}

Graphs serve as natural models for a wide range of network-based systems across various disciplines, such as molecular biology \cite{liu}, economics \cite{zhou} and social media analysis \cite{wang}. Within these contexts, graphs can be effectively understood as significant instances of metric spaces. By endowing the set of vertices of a graph with an appropriate metric (usually related to the general notion of shortest-path distance) we enable the application of geometric and functional analytic techniques traditionally reserved for continuous spaces (see \cite{bani, bour, weaver2018lipschitz}). For example, viewing graphs as metric spaces allows the use of well-studied techniques to embed metric spaces into Banach spaces \cite{weaver2018lipschitz}. 

Outside of classical analysis, graph embeddings have recently gained substantial interest in applied mathematics due to their ability to represent graph-structured data in computational settings \cite{cai,maka}. Such embeddings transform nodes, edges, or entire graphs into finite-dimensional vector spaces while preserving essential structural features of the original graphs. Their importance lies in simplifying complex relationships, enabling more efficient and robust computational methodologies, particularly in machine learning, data analysis, and visualization tasks \cite{dasg,hjal,pileh}. 
In the same direction, graphs understood as metric spaces, as well as Lipschitz functions defined on them, have proven to be useful tools in machine learning \cite{cal,cha,falsan,kyng}.

Following the same trend, this paper explores the application of functional analytic techniques from a different perspective. We focus specifically on the metric structure of spaces of walks on graphs. In our case, the connection to classical analysis arises from the conceptualization of walks as Lipschitz sequences \cite{goy}, which opens the door to the application of powerful analytical tools for theoretical and applied investigations. We propose viewing the space of walks as a dual object in relation to the graph itself, inspired by standard duality theory associated with metric spaces \cite{A.E56}. Our approach introduces a weighted metric on the sequences representing walks, defining distances through vertex-by-vertex comparisons, and a norm derived from a suitable weighted scheme.

We explore the fundamental properties of these newly defined metric spaces, establishing the framework for defining and analyzing functions that quantify relative similarities between walks, which we term proximities. These functions assign a non-negative real value to each pair of walks and are intended to model notions of distance without satisfying all the properties of a metric, thus capturing the widespread use of metric-type functions that are not proper metrics in machine learning (e.g., cosine similarity). We provide explicit representation formulas for these proximities under various scenarios and present concrete constructions. Building on this robust metric framework, we apply classical tools from metric modeling, such as the extension of Lipschitz functions from subspaces, which enables the generalization of proximity functions while preserving their essential properties through established representation techniques.

Potential applications of our framework include estimating proximities and developing reinforcement learning algorithms that rely on exploratory walks, thus offering a powerful methodological basis for Lipschitz regression within network-based structures. Our results provide theoretical insights as well as practical tools for analyzing complex network dynamics effectively.

We will show the results of our study through several sections. After the Introduction, we show in Section \ref{s2} the main results concerning the metric structure of the spaces of walks and the identification with spaces of Lipschitz functions. Section \ref{s3} will be focus on the analysis of the dual space of the (metric) space of walks, and we will show its main properties, including compactness results. The Section \ref{s4} is devoted to introduce the concept of proximity, and to study some representations that can be obtained through standard measure theory results. Finally, 
in Section \ref{s5} 
we show some examples that highlight the most important concepts and results presented before, and discuss several applications. 

Let us give now some basic definitions and notations. 
Recall that a \emph{graph} $G$ is a pair $(V,E)$ where $V$ is a nonempty set and $E\subset V\times V$. The elements of $V$ and $E$ are called \emph{vertices} and \emph{edges} respectively. We require that $(v,v)\in E$ for all $v\in V$. A \emph{walk} $w$ is a sequence $w\colon\mathbb N\to V$ such that $(w(i),w(i+1))\in E$ for all $i\in\mathbb N$. Note that the image of a walk can be finite, since we have required $(v,v)\in E$ for all vertices. We denote the set of all walks of the graph $G$ as $W(G)$. Moreover, a graph $G$ is \emph{connected} if for any vertices $v_1,v_2\in V$ there exists a walk $w\in W(G)$ such that $v_1,v_2\in w(\mathbb N)$. In addition, a graph $G$ is \emph{non-directed} if for any $(u,v)\in E$ then $(v,u)\in E$. From now on, we will work with non-directed connected graphs. We assume that since we will understand the graph $G$ as a metric space endowed with the shortest path distance (which is given by the minimum number of edges connecting two vertices), and it is always well defined if the graph is connected. In this case, we can consider $E$ as the pairs of verteces $(u,v)$ such that $d(u,v)\leq 1$. On the other hand, any graph endowed with this metric is a complete metric space. Either otherwise is stated, we will assume that the graphs are bounded, that is $\sup_{v_1,v_2} \, d(v_1,v_2) < \infty.$
 A rigorous exposition of graph-theoretic foundations is provided in \cite{bondy}.

On the other hand, it is said that a function $f:M \to N$ between metric spaces $(M,d)$ and $(N,\rho)$ is Lipschitz if it satisfies the Lipschitz inequality for a certain constant $K>0,$ that is,
$$
\rho(f(v_1),f(v_2)) \le K \, d(v_1,v_2), \quad v_1,v_2 \in M. 
$$
The infimum of such constants is the so called Lipschitz constant of $f,$ and it is denoted by $\|f\|_{Lip}$. It can be defined explicitly as 
$$
\|f\|_{Lip}=\sup\left\{\frac{|\rho(f(x))-\rho(f(y))|}{d(x,y)}:x,y\in M, x\neq y\right\}.
$$

The set of all Lipschitz functions from $M$ to $N$ is denoted by $\operatorname{Lip}(M,N)$, that is, $\operatorname{Lip}(M,N)=\{f\colon M\to N : \|f\|_{Lip}<\infty\}$. If the Lipschitz function $f$ is only known in certain subspace $M_0\subset M$, there exist techniques to extend the function to the entire space $M$ while preserving the Lipschitz constant $K$ in $M_0$. Some well-known extensions are the McShane and Whitney formulas, which can be computed explicitly as
$$F^M(x):=\sup_{y\in M_0}\{f(y)-Kd(x,y)\}, \quad F^W(x):=\inf_{y\in M_0}\{f(y)+Kd(x,y)\},$$
respectively. Furthermore, for any other extension $F$ of $f$ yields $F^M\leq F\leq F^W$, and $F_\alpha:=(1-\alpha)F^M+\alpha F^W$ is another extension of $f$ for each $0\leq\alpha\leq 1$.

The space $\operatorname{Lip}(M,\mathbb R)$ is a vectorial subspace of the real continuous functions in $M$. However, it is not a normed space considering the norm provided by the Lipschitz constant. To this end, it is defined the space $$\operatorname{Lip}_0(M,\mathbb R)=\{f\in\operatorname{Lip}(M,\mathbb R):f(\theta)=0\},$$
where $\theta\in M$ is a fixed element in $M$. In this space, $\|\cdot\|_{Lip}$ defines a norm such that $\operatorname{Lip}_0(M, \mathbb R)$ equipped with it is a Banach space. An important observation is that very point $x\in M$ can be represented by the continuous linear operator $\delta_x\colon \operatorname{Lip}_0(M, \mathbb R)\to\mathbb R$ defined as $\delta_x(f):=f(x)$. The \emph{free space} of $M$, denoted by $\mathcal{F}(M)$, is defined as the closed linear span $\mathcal{F}(M):=\overline{\operatorname{span}}\{\delta_x:x\in M\}$ in $\operatorname{Lip}_0(M,\mathbb R)^*$. The norm in the free space, which is also known in the literature as the Arens-Eells space $AE(M)$, is the induced by duality: $$\|m\|_{\mathcal F}=\sup_{\|\varphi\|_{Lip}\leq 1} |m(\varphi)|.$$ Then, the map $\delta\colon M\to\mathcal F(M)$ defined as $\delta(x):=\delta_x$ is an isometric embedding from the metric space $M$ to the Banach space $\mathcal F(M)$. Furthermore, the free space is the predual of $\operatorname{Lip}_0(M,\mathbb R)$, that is, $\mathcal F(M)^*\cong \operatorname{Lip}_0(M,\mathbb R)$.

We refer to \cite{cob} for general questions on Lipschitz functions, and \cite{weaver2018lipschitz} for more information about the free space.

\section{The metric space of walks}\label{s2}

The aim of this section is to present how we understand the space of walks $W(G)$ in a graph $G$ as a metric space. In a general context, this aim is framed in the general and important problem of determining how similar two trajectories are, which has been widely discussed (see \cite{driemel2022discrete,su2020survey}). Some of the ideas that can be found relate, broadly speaking, to how to ``deform" one of the trajectories or walks into the other and then measure this change.

We will consider the following metric for walks in a bounded, non-directed and connected graph $G=(V,E)$.  For a fixed positive sequence $\tau=(\tau_i)_i$ such that $\sum_{i=1}^\infty \tau_i=1$, we define the distance between walks $w,u\in W(G)$ as
$$
d_\tau \big( w, u  \big)= \sum_{i=1}^\infty \tau_i \, d \big(w(i),u(i)\big).
$$

This metric aligns the vertex $i$ of the walk $w$ with the corresponding vertex of $u$, so that the distance between them is $d(w(i),u(i))$. But the alignment of two vertices can be more or less expensive, apart from their distance, depending on what they represent. Therefore, $\tau_i$ will represent the cost of this alignment.

Before showing that $d_\tau$ is indeed a metric and some properties of the resulting metric space, let us shown an important characterization of the space of walks. In particular, we will see that any walk can be seen as an element of the unit closed ball of Lipschitz sequences $\operatorname{Lip}(\mathbb N,V)$, that is, $B_{Lip}(G)=\{\alpha\colon\mathbb N\to V:\|\alpha\|_{Lip}\leq 1\}$. To show this, we will  understand $\mathbb N$ as a metric subspace of $\mathbb R$ with the usual euclidean distance.

\begin{lemma}\label{le1}
Consider a graph $G$ endowed with the shortest path distance $d.$ Then, the set of nonconstant walks can be identified with the unit sphere $S_{Lip}(G)=\{\alpha\colon\mathbb N\to V:\|\alpha\|_{Lip}=1\}$.

\begin{proof}
For any walk $w\in W(G)$ and $i,j\in\mathbb N$, the verteces $w(i)$ and $w(j)$ are connected by the self walk $w$ with length $|i-j|$. Thus, by definition of $d$ we get
\begin{equation}\label{eq:lipin}
    d(w(i),w(j))\le |i-j|,
\end{equation}
and therefore $\|w\|_{Lip}\le 1$. Suppose now that $\|w\|_{Lip}< 1$. If the walk is nonconstant, there exists $i\in\mathbb N$ such that $w(i)\neq w(i+1)$, and hence $$1=d(w(i),w(i+1))\leq \|w\|_{Lip} <1,$$
which is a contradiction. Consequently, for each nonconstant walk $w$ we have that $\|w\|_{Lip}=1$. For the converse, take a Lipschitz sequence $w\in S_{Lip}(G).$ By definition we get $d(w(i),w(i+1))\leq 1 |i-i-1|=1$ and thus $(w(i),w(i+1))\in E$. Therefore, $w\in W(G)$.
\end{proof}
\end{lemma}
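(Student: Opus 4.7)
The plan is to prove the identification by establishing both inclusions between the set of nonconstant walks and $S_{Lip}(G)$, exploiting the discrete nature of the shortest path distance.

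For the inclusion of nonconstant walks into $S_{Lip}(G)$, I would first obtain the upper bound $\|w\|_{Lip}\le 1$ by iterating the edge condition along the walk: given $i<j$ in $\mathbb N$, the consecutive pairs $(w(k),w(k+1))$ lie in $E$ and hence satisfy $d(w(k),w(k+1))\le 1$; applying the triangle inequality $|j-i|$ times yields $d(w(i),w(j))\le|i-j|$, which is exactly the Lipschitz bound with constant $1$. For the reverse bound, I would use that the shortest path distance takes values in $\mathbb N\cup\{0\}$, so whenever $w$ is nonconstant and $w(i)\neq w(i+1)$ we get $d(w(i),w(i+1))\geq 1$, and together with $|i-(i+1)|=1$ this forces $\|w\|_{Lip}\geq 1$.

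For the converse inclusion, I would take $w\in S_{Lip}(G)$ and exploit the Lipschitz condition on consecutive indices: $d(w(i),w(i+1))\le \|w\|_{Lip}\cdot 1=1$, which by the characterization of $E$ given in the introduction (pairs of vertices at distance $\le 1$) shows $(w(i),w(i+1))\in E$, i.e., $w\in W(G)$. Nonconstancy is automatic from $\|w\|_{Lip}=1>0$, since a constant sequence has Lipschitz constant zero.

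The proof is essentially a matter of carefully assembling the triangle inequality bound with the integer-valued nature of the shortest path metric. The one slightly delicate point, which I would treat explicitly, is the upper bound $d(w(i),w(j))\le|i-j|$: the excerpt's own sketch argues this by saying $w(i)$ and $w(j)$ are joined by the subwalk of length $|i-j|$, but I would prefer to write it as an explicit telescoping application of the triangle inequality, since this makes transparent the step that really uses $d(w(k),w(k+1))\le 1$ rather than relying on the informal notion of walk length. No other serious obstacle appears; the remaining verifications are one-line consequences of the hypotheses.
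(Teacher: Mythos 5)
Your proposal is correct and follows essentially the same route as the paper's proof: the upper bound via chaining edges (the paper phrases it as the subwalk of length $|i-j|$, you as an explicit telescoping triangle inequality), the lower bound via integer-valuedness of $d$ on a nonconstant step, and the converse via $d(w(i),w(i+1))\le 1$ implying $(w(i),w(i+1))\in E$. Your explicit remark that nonconstancy of $w\in S_{Lip}(G)$ follows from $\|w\|_{Lip}=1>0$ is a small completeness point the paper leaves implicit, but it does not change the argument.
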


\begin{proposition}\label{prop:walklip}
    The set of walks $W(G)$ can be identified as $B_{Lip}(G)$.
    \begin{proof}
        To get the result it is enough to consider that $B_{Lip}(G)$ contains only sequences with Lipschitz constant 1 or 0. Indeed, for $w\in B_{Lip}(G)$ such that $\|w\|_{Lip}<1$ then $d(w(i),w(i+1))\leq \|w\|_{Lip}<1$ for any $i\in\mathbb N$. Therefore, $d(w(i),w(i+1))=0$ and so $w$ is a constant walk since $w(i)=w(i+1)$ and hence $\|w\|_{Lip}=0$. So, the unitary closed ball $B_{Lip}(G)$ contains the constant walks and the nonconstant (in its sphere) by Lemma \ref{le1}. On the other hand, considering \eqref{eq:lipin} we also get that every walk is in $B_{Lip}(G)$.
    \end{proof}
\end{proposition}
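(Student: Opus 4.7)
The plan is to establish the identity $W(G)=B_{Lip}(G)$ by double inclusion, leveraging the characterization of nonconstant walks already provided in Lemma \ref{le1} together with the discreteness of the shortest-path metric.

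For the inclusion $W(G)\subseteq B_{Lip}(G)$, I would simply appeal to the inequality $d(w(i),w(j))\leq|i-j|$ established as \eqref{eq:lipin} in the proof of Lemma \ref{le1}. This immediately gives $\|w\|_{Lip}\leq 1$ for every walk, whether constant (in which case the Lipschitz constant is $0$) or nonconstant (in which case it equals $1$ by Lemma \ref{le1}).

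The substantive direction is $B_{Lip}(G)\subseteq W(G)$. Here the key observation is that the shortest-path distance on $V$ is integer-valued. Given $\alpha\in B_{Lip}(G)$ and any $i\in\mathbb{N}$, the quantity $d(\alpha(i),\alpha(i+1))$ is at once a nonnegative integer and bounded above by $\|\alpha\|_{Lip}\cdot|i-(i+1)|\leq 1$, so it must equal either $0$ or $1$. If it equals $1$, then $(\alpha(i),\alpha(i+1))\in E$ by the very definition of the shortest-path metric; if it equals $0$, then $\alpha(i)=\alpha(i+1)$ and $(\alpha(i),\alpha(i+1))=(v,v)\in E$ by the standing assumption that every vertex carries a self-loop. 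In either case, $\alpha$ satisfies the defining property of a walk.

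There is no real obstacle in this proof; the only subtle point, which is worth stating explicitly for clarity, is the dichotomy that $\|\alpha\|_{Lip}\in\{0,1\}$ for every $\alpha\in B_{Lip}(G)$. This rules out the a priori possibility of ``fractional'' Lipschitz constants strictly between $0$ and $1$: such a sequence would force some consecutive distance to be a nonzero integer strictly less than one, which is impossible. Once this dichotomy is noted, the proposition follows from Lemma \ref{le1} together with the trivial observation about constant sequences.
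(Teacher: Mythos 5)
Your proof is correct and follows essentially the same route as the paper: the inclusion $W(G)\subseteq B_{Lip}(G)$ via the inequality \eqref{eq:lipin}, and the reverse inclusion via the integrality of the shortest-path metric, which forces consecutive distances in the unit ball into $\{0,1\}$ and hence onto edges (using the self-loops for the value $0$). The only cosmetic difference is that you verify the edge condition directly for every element of the ball, whereas the paper splits into the constant case and the unit-sphere case handled by Lemma \ref{le1}; the underlying argument is identical.
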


\begin{proposition}\label{prop:com}
The function $d_\tau: W(G) \times W(G) \to \mathbb R$ is a well-defined metric, and $(W(G),d_\tau)$ is a complete metric space.
\begin{proof}
For any two walks $w,u \in W(G)$ holds
$$
d_\tau(w,u)= \sum_{i=1}^\infty \tau_i d(w(i),u(i)) \le \big( \sum_{i=1}^\infty \tau_i \big) \, \sup_{v_1,v_2 \in V} \, d(v_1,v_2)
<\infty,
$$
and so it is well-defined. In addition, it is straightforward that the function satisfies the triangle inequality and symmetry. Finally, suppose that $d_\tau(w,u)=0.$ Then, for all $i \in \mathbb N,$ we have that $d(w(i),u(i))=0$ since $\tau_i>0$ and so
$w(i)=u(i)$. Therefore, $w=u$.

On the other hand, consider a Cauchy sequence $(w_n)_n$ in $W(G).$ Then, for every $\varepsilon >0$ there exist an index $n_0$ such that for all $n,k \ge n_0$ we get $d_\tau(w_n,w_k) < \varepsilon.$
In particular, this means that for all $i \in \mathbb N,$ $d(w_n(i),w_k(i)) < \varepsilon/\tau_i.$ Consequently, since the parameter $\tau_i$ is fixed for a fixed $i,$ we conclude that, 
$$
\lim_{n,k} d(w_n(i),w_k(i)) = 0,
$$
and so $(w_n(i))_n$ is a Cauchy sequence in $G$. Moreover, since $(G,d)$ in complete, there exists the limit in $G,$ denote it $w_0(i)$ (that is, $w_0$ is defined pointwise as all the pointwise limits given above). Let us show now that $w_0 \in W(G).$ For every $i\in \mathbb N$ and $\varepsilon >0,$ 
there exists $n\in\mathbb N$ such that $d(w_n(i),w_n(i+1))<\varepsilon/2$. Since $\|w_n\|_{Lip} \le 1$ we get
\begin{align*}
     & d(w_0(i),w_0(i+1)) \\ \le & \,\, d(w_0(i),w_n(i)) + d(w_n(i),w_n(i+1))+ d(w_n(i+1),w_0(i+1)) \\
     \le& \,  \, \varepsilon + 1.
\end{align*}
From this we conclude that $d(w_0(i),w_0(i+1))\leq 1$ and so $(w_0(i),w_0(i+1))\in E$. That is, $w_0\in W(G)$ as desired.

\end{proof}
\end{proposition}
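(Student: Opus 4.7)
The plan is to split the claim into the three metric axioms (plus well-definedness) and then, separately, the completeness argument. For well-definedness, since the graph is assumed bounded, set $D:=\sup_{v_1,v_2\in V}d(v_1,v_2)<\infty$; the estimate $d_\tau(w,u)\le D\sum_i\tau_i=D$ shows the defining series converges for every pair. Symmetry and the triangle inequality follow termwise from the corresponding properties of $d$, using that the weights $\tau_i$ are nonnegative so the sum respects both. For identity of indiscernibles, the key is that $\tau_i>0$ strictly: if $d_\tau(w,u)=0$, each summand $\tau_i d(w(i),u(i))$ vanishes, forcing $w(i)=u(i)$ for all $i$.

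For completeness, I would take a Cauchy sequence $(w_n)_n\subset W(G)$ and produce a pointwise limit. From the inequality $\tau_i\,d(w_n(i),w_k(i))\le d_\tau(w_n,w_k)$, for each fixed $i$ the sequence $(w_n(i))_n$ is Cauchy in $(V,d)$. Since the shortest-path distance is integer valued and $(V,d)$ is complete, the limit $w_0(i)\in V$ exists (in fact the sequence is eventually constant in $i$). This defines a candidate $w_0\colon\mathbb N\to V$.

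Next I would verify $w_0\in W(G)$, i.e. $d(w_0(i),w_0(i+1))\le 1$ for every $i$. Fixing $i$ and $\varepsilon>0$, I would choose $n$ large enough so that $d(w_0(i),w_n(i))<\varepsilon/2$ and $d(w_0(i+1),w_n(i+1))<\varepsilon/2$, and use
\[
d(w_0(i),w_0(i+1))\le d(w_0(i),w_n(i))+d(w_n(i),w_n(i+1))+d(w_n(i+1),w_0(i+1))\le 1+\varepsilon,
\]
since $w_n$ is itself a walk. Letting $\varepsilon\downarrow 0$ (and using integrality of $d$) gives $d(w_0(i),w_0(i+1))\le 1$, so $(w_0(i),w_0(i+1))\in E$.

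The main obstacle is the final step: upgrading the pointwise convergence $w_n(i)\to w_0(i)$ to $d_\tau(w_n,w_0)\to 0$, because $d_\tau$ is an infinite sum and convergence of sums under pointwise convergence is not automatic. My plan here is a dominated-convergence-style truncation. Given $\varepsilon>0$, pick $N$ so that $D\sum_{i>N}\tau_i<\varepsilon/2$, which is possible because $\sum_i\tau_i=1$ and each term is uniformly bounded by $\tau_i D$. On the finite head $i\le N$, use the pointwise convergence on each of the finitely many coordinates to find $n_0$ with $\tau_i d(w_n(i),w_0(i))<\varepsilon/(2N)$ for all $n\ge n_0$ and $i\le N$. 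Adding the two contributions yields $d_\tau(w_n,w_0)<\varepsilon$ for all $n\ge n_0$, which completes the proof.
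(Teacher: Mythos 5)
Your proposal is correct and follows essentially the same route as the paper: termwise verification of the metric axioms using positivity of the weights, pointwise Cauchyness in each coordinate for completeness, and the three-term triangle inequality to show the pointwise limit $w_0$ is again a walk. The one substantive difference is your final paragraph: you explicitly prove $d_\tau(w_n,w_0)\to 0$ by splitting the sum into a finite head (controlled by pointwise convergence) and a tail (controlled by $D\sum_{i>N}\tau_i$), whereas the paper stops after showing $w_0\in W(G)$ and never verifies that $w_0$ is the limit in the metric $d_\tau$. Since pointwise convergence does not by itself give convergence of the weighted infinite sum, your truncation argument closes a step that the paper leaves implicit, and it is exactly the right tool for doing so.
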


Although we have distinguished between constant and nonconstant walks in the previous results, we will also study the special case of eventually constant walks. A walk $w\in W(G)$ is eventually constant if there exist $i_0\in\mathbb N$ such that $w(i)=w(i+1)$ for all $i\geq i_0$. Note that every constant walk is also eventually constant for $i_0=1$. The following results will explore this type of walk.

\begin{lemma}
A walk $w\in W(G)$ is eventually constant if and only if there exist a vertex $v \in V$ such that $\lim_i d(w(i),v) = 0.$
\end{lemma}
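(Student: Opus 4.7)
The plan is to prove both directions directly, leveraging the discreteness of the shortest-path distance on the vertex set. The key observation I will exploit is that $d$ takes values in the non-negative integers, so any sequence $\big(d(w(i),v)\big)_i$ that converges to $0$ must eventually equal $0$.

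For the forward implication, I will assume $w$ is eventually constant with threshold $i_0 \in \mathbb N,$ set $v := w(i_0) \in V,$ and observe that $w(i) = v$ for every $i \geq i_0.$ Consequently $d(w(i),v)=0$ for all $i \geq i_0,$ which trivially yields $\lim_i d(w(i),v)=0.$

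For the reverse implication, I will suppose there exists $v \in V$ with $\lim_i d(w(i),v)=0.$ Taking $\varepsilon = 1/2$ in the definition of limit, there exists $i_0 \in \mathbb N$ such that $d(w(i),v)<1/2$ for all $i \geq i_0.$ Since the shortest-path distance between two vertices is a non-negative integer, this forces $d(w(i),v)=0,$ hence $w(i)=v$ for all $i\geq i_0.$ In particular $w(i)=w(i+1)$ for $i\geq i_0,$ so $w$ is eventually constant.

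I do not foresee a genuine obstacle here; the only subtle point is the use of the integer-valued nature of the shortest-path distance to upgrade convergence to eventual equality, which is specific to the graph setting and distinguishes this lemma from its general metric-space analog.
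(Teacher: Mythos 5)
Your proof is correct and is precisely the standard argument the paper has in mind when it declares the proof "straightforward" and omits it: the only substantive point is that the shortest-path distance is integer-valued, so convergence to $0$ forces $d(w(i),v)=0$ eventually, and you handle this correctly. No gap; nothing further needed.
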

The proof is  straightforward.
It is important to note that the limit of a convergent sequence of eventually constants graphs $(w_n)_n$ is not necessarily eventually constant. Consider the following example, based on a graph with two directly connected vertices $v_1$ and $v_2.$ Define
$$
w_1(1)=v_1, \, w_1(i)=v_2, \quad i \ge 2, 
$$
$$
w_2(1)=v_1, \, w_2(2)=v_2, \, w_2(i)=v_1, \quad i \ge 3,
$$  
$$
w_3(1)=v_1, \, w_3(2)=v_2, \, w_3(3)=v_1, \, w_3(i)= v_2 \quad i \ge 4,
$$ 
$$
\cdots \cdots
$$
In general, for every $n \in \mathbb N,$
$$
w_n(i) =
\begin{cases}
v_1 & \text{if } i \text{ is odd}, \\
v_2 & \text{if } i \text{ is even}
\end{cases}
\quad \text{for } i \leq n, 
$$
\text{and}
$$w_n(i) =
\begin{cases}
v_1 & \text{if } n \text{ is even}, \\
v_2 & \text{if } n \text{ is odd}
\end{cases}
\quad \text{for} \ i > n.
$$
The limit of this sequence of eventually constant walks is the walk $w_0$ given by $w_0(i)=v_1$ when $i$ is odd and $w_0(i)=v_2$ when $i$ is even. Indeed, note that for every $n\in\mathbb N$ we have that $d(w_n(i),w_0(i))=0$ if $i\leq n$, and 
$d(w_n(i),w_0(i))\leq 1$ for $i>n$. Therefore, $$d_{\tau}(w_n,w_0)=\sum_{i=1}^\infty \tau_id(w_n(i),w_0(i))=\sum_{i=n+1}^\infty \tau_id(w_n(i),w_0(i))\leq \sum_{i=n+1}^\infty \tau_i,$$
and since the series of $(\tau_i)_i$ is convergent we conclude that $\lim_n d_\tau(w_n,w_0)=0$.

\section{Duality in the space of walks} \label{s3}
The aim of this section is to define a space of real functions which can be identified with the ``dual space'' of the metric space of walks defined in the previous section. After developing and proving some basic results of this notion of dual functions, which we call evaluations of the graph, we will describe how the duality between the spaces should work.

 For a graph $G=(V,E),$ we define the space of evaluations of the graph as the space $E(G):= \operatorname{Lip}_0(V,\mathbb R)$, that is, the real Lipschitz functions $\varphi:V \to \mathbb R$ such that $\varphi(v_0)=0$, where $v_0 \in V$ is a fixed chosen vertex of the graph. Recall that this space endowed with the Lipschitz norm is a Banach space. An important observation is that the evaluations of the graph are bounded functions, since for every $v \in V$ we have that
$$
\sup_{v\in V}|\varphi(v)|=\sup_{v\in V}| \varphi(v)- \varphi(v_0)| 
$$
$$
\le \|\varphi\|_{Lip} \, \sup_{v\in V}d(v,v_0) 
\le \|\varphi\|_{Lip} \, \sup_{v_1,v_2 \in V} \, d(v_1,v_2) < \infty.
$$

Let us show first how these functions interact with the space of walks $W(G)$. To do that, we will consider for $\varphi \in E(G)$ and $w \in W( G)$ the composition $\varphi \circ w: \mathbb N \to \mathbb R.$ We write as $\varphi \circ W(G)$ the space all these sequences.
\begin{lemma} \label{1dual}
Consider a metric graph $(G,d)$ and the fixed sequence $\tau=(\tau_i)_i.$

\begin{itemize}
\item[(i)] The sequences $\varphi \circ W(G)$ are bounded and $\|\varphi \circ w\|_{Lip} \le \|\varphi\|_{Lip}$ for every $w \in W(G).$

\item[(ii)] For every $\varphi \in E(G)$ yields
$
\|\varphi\|_{Lip}= \sup_{w \in W(G)} \|\varphi \circ w\|_{Lip}.
$

\item[(iii)] Let $\mathcal{B}(\mathbb R)$ be the space of bounded sequences in $\mathbb R$ equiped with the metric $d_{\mathbb R,\tau}((a_i),(b_i))=\sum_i\tau_i|a_i-b_i|$. Then, the inclusion map  $\Upsilon: \big( \varphi \circ W(G), d_\tau \big) \to \big( \mathcal B(\mathbb R), d_{\mathbb R,\tau} \big)$ is  Lipschitz and $\|\Upsilon\|_{Lip} \le \|\varphi\|_{Lip}.$

%\item[(iv)] For every $w_1,w_2\in W(G)$ and $\varphi\in E(G)$ holds $\|\varphi\circ w_1-\varphi\circ w_2\|_{Lip}\leq 2\|\varphi\|_{Lip}$

\end{itemize}

\end{lemma}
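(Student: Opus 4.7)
The plan is to handle the three items largely independently, leaning on the identification $W(G) = B_{Lip}(G)$ established in Proposition~\ref{prop:walklip} and on the observation, made just before the lemma, that every $\varphi\in E(G)$ is bounded on $V$.

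For (i), the boundedness of $\varphi\circ w$ as a real sequence is an immediate consequence of the boundedness of $\varphi$ on $V$. For the Lipschitz estimate, I would invoke Proposition~\ref{prop:walklip} to get $\|w\|_{Lip}\le 1$ and then chain the two Lipschitz inequalities: for any $i\neq j$,
$$
|\varphi(w(i))-\varphi(w(j))| \le \|\varphi\|_{Lip}\, d(w(i),w(j)) \le \|\varphi\|_{Lip}\,\|w\|_{Lip}\,|i-j| \le \|\varphi\|_{Lip}\,|i-j|.
$$
Dividing by $|i-j|$ and taking supremum yields $\|\varphi\circ w\|_{Lip}\le \|\varphi\|_{Lip}$.

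For (ii), one inequality is just (i). For the reverse, given distinct vertices $v_1,v_2\in V$, I would use the shortest-path hypothesis to produce a walk $w\in W(G)$ with $w(1)=v_1$ and $w(n+1)=v_2$ where $n=d(v_1,v_2)$, following a geodesic on the first $n+1$ coordinates and then continuing via the self-loop $(v_2,v_2)\in E$ (allowed by the paper's convention) so that $w$ is defined on all of $\mathbb N$. Specializing the supremum defining $\|\varphi\circ w\|_{Lip}$ to the indices $1$ and $n+1$ gives
$$
\|\varphi\circ w\|_{Lip}\ge \frac{|\varphi(v_1)-\varphi(v_2)|}{n}=\frac{|\varphi(v_1)-\varphi(v_2)|}{d(v_1,v_2)}.
$$
Taking supremum over $v_1\neq v_2$ on the right yields $\|\varphi\|_{Lip}$ (since $\varphi(v_0)=0$ makes the Lipschitz seminorm equivalent to the quotient supremum here), so combined with (i) equality follows.

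Part (iii) is a one-line estimate: for any $w,u\in W(G)$,
$$
d_{\mathbb R,\tau}(\varphi\circ w,\varphi\circ u) = \sum_i \tau_i|\varphi(w(i))-\varphi(u(i))| \le \|\varphi\|_{Lip}\sum_i \tau_i\, d(w(i),u(i)) = \|\varphi\|_{Lip}\, d_\tau(w,u),
$$
which gives simultaneously that $\Upsilon$ is well defined (the output is indeed a bounded sequence, already noted in (i)) and Lipschitz with the claimed constant. The only mildly delicate point in the whole lemma is the geodesic construction in (ii); everything else is a direct consequence of the compatibility between the Lipschitz condition on $V$ and the Lipschitz condition on $\mathbb N$-indexed walks.
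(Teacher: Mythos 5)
Your proposal is correct and follows essentially the same route as the paper: (i) via the composition bound $\|\varphi\circ w\|_{Lip}\le\|\varphi\|_{Lip}\|w\|_{Lip}$ together with $\|w\|_{Lip}\le 1$, (ii) via realizing the distance between two vertices by a geodesic walk and specializing the Lipschitz quotient to the corresponding indices, and (iii) via the same one-line weighted estimate. The only differences are cosmetic: you take a direct supremum over vertex pairs where the paper uses an $\varepsilon$-approximation, and you make explicit the self-loop continuation of the geodesic walk, which the paper leaves implicit.
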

\begin{proof}
Since the functions $\varphi\in E(G)$ are bounded, then $\varphi\circ w$ is also bounded for every $w\in W(G)$. Furthermore, since $\|f\circ g\|_{Lip}\leq \|f\|_{Lip}\|g\|_{Lip}$ for any composition of Lipschitz functions $f$ and $g$, and since $\|w\|_{Lip}\leq 1$ by Proposition \ref{prop:walklip}, we get the result of (i).

On the other hand, fix $\varphi \in E(G).$ For $\varepsilon >0$ there exist two different vertices $v_1$ and $v_2$ such that 
$$
\|\varphi\|_{Lip} -\varepsilon <  \frac{\big| \varphi(v_1)- \varphi(v_2) \big|}{d(v_1,v_2)}.
$$
%Write $n=d(v_1,v_2).$ By definition, there exists a path starting from $v_1$ and finishing in $v_2$ that contains exactly $n+1$ vertices, the number of edges connecting $v_1$ and $v_2.$ Consider the associated walk $v_1=w(1),...,w(n+1)=v_2,$ and $w(k)=v_2$ for every $k >n+1.$
Since the graph $G$ is connected, there exists a walk $w\in W(G)$ such that $w(i)=v_1$, $w(j)=v_2$ and $d(v_1,v_2)=|i-j|$.
Then,
\begin{align*}
    \|\varphi\|_{Lip} -\varepsilon &<
  \frac{\big| \varphi(v_1)- \varphi(v_2) \big|}{d(v_1,v_2)}
  =
  \frac{\big| \varphi(w(i))- \varphi(w(j)) \big|}{|i-j|}\\
  &\le \sup_{w \in W(G), \, i \ne j}  \frac{\big| \varphi(w(i))- \varphi(w(j)) \big|}{|i-j|} \\ &= \sup_{w \in W(G)} \, \|\varphi \circ w\|_{Lip}
\end{align*}
Since this holds for every $\varepsilon>0$ we get $\|\varphi\|_{Lip}\leq sup_{w\in W(G)} \|\varphi\circ w\|_{Lip}$, and by (i) we conclude (ii).

Finally, for $w, u\in W(G)$ note that, since $\varphi$ is Lipschitz, we get
$$
\sum_{i=1}^\infty \tau_i \big|\varphi(w(i))- \varphi(u(i)) \big|
\le \|\varphi\|_{Lip} \, \sum_{i=1}^\infty \tau_i \, d(w(i),u(i)),
$$
which gives (iii).
    
\end{proof}

Lemma \ref{1dual} opens the door to define a duality relation between the space of walks in the graph $W(G)$ and the space of Lipschitz evaluations of the graph $E(G)$. Using the traditional notation for duality, we write
$$
\langle w, \varphi \rangle = \sum_{i=1}^\infty \tau_i \, \varphi(w(i)),
\qquad w \in W(G), \,\, \varphi \in E(G).
$$
Then, we define 
$$
\langle w_1 \circleddash w_2, \varphi \rangle := \langle w_1, \varphi \rangle - \langle w_2, \varphi \rangle.
$$
We can extend this definition to some sort of restrictions of walks on subsets of $\mathbb N$ as follows. If $\Sigma$ is the $\sigma-$algebra of all the subsets of natural numbers, we can define for $w \in W(G)$ and $A \in\Sigma$ the set function $A \mapsto w(A)$ in $\operatorname{Lip}(\mathbb N, V)$ by
$$
w(A)(i) := w(i) \quad \text{if} \quad i \in A, \quad \text{and} \quad
w(A)(i)= v_0 \quad \text{if} \quad i \notin A.
$$
Thus,
$$
\langle w, \varphi \rangle (A) = \sum_{i \in A} \tau_i \, \varphi(w(i)),
\qquad w \in W(G), \,\, \varphi \in E(G),
$$
and
$$
\langle w_1 \circleddash w_2, \varphi \rangle(A) = 
\langle w_1(A) \circleddash w_2(A), \varphi \rangle
=\langle w_1(A), \varphi \rangle - \langle w_2(A), \varphi \rangle.
$$

\begin{lemma}
If $\varphi \in E(G),$ the following equalities hold.
\begin{itemize}
    \item[(i)] The Lipschitz norm of $\varphi$ can be computed by
$$\|\varphi\|_{Lip} = \sup \left\{\frac{\big| \langle w_1 \circleddash w_2(A), \varphi \rangle \big|}{d_\tau(w_1,w_2)}: w_1,w_2 \in W(G), \, w_1 \ne w_2, \, A \in \Sigma \right\}.
$$
\item[(ii)] $\sup_{\|\varphi\|_{Lip}\leq 1}\big| \langle w_1 \circleddash w_2, \varphi \rangle \big| \le {d_\tau(w_1,w_2)}$
\end{itemize}
\end{lemma}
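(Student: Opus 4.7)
For part (ii), my plan is a direct calculation. Using the definition of the duality pairing and the triangle inequality for absolute values, one obtains
$$
\big|\langle w_1 \circleddash w_2,\varphi\rangle\big| = \left|\sum_{i=1}^\infty \tau_i\big(\varphi(w_1(i))-\varphi(w_2(i))\big)\right| \le \sum_{i=1}^\infty \tau_i\,|\varphi(w_1(i))-\varphi(w_2(i))|.
$$
Since $\varphi\in E(G)$ is Lipschitz, each term is bounded by $\|\varphi\|_{Lip}\,d(w_1(i),w_2(i))$, and factoring out $\|\varphi\|_{Lip}$ produces $\|\varphi\|_{Lip}\,d_\tau(w_1,w_2)$. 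Imposing $\|\varphi\|_{Lip}\le 1$ yields the claimed estimate.

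For the inequality ``$\le$'' in part (i), the argument is essentially the same, restricted to an arbitrary $A\in\Sigma$. Indeed, by the same chain of estimates
$$
\big|\langle w_1\circleddash w_2,\varphi\rangle(A)\big| \le \sum_{i\in A}\tau_i\,|\varphi(w_1(i))-\varphi(w_2(i))| \le \|\varphi\|_{Lip}\sum_{i\in A}\tau_i\,d(w_1(i),w_2(i)),
$$
and the last sum is dominated by the full sum over $\mathbb N$, which equals $d_\tau(w_1,w_2)$. Dividing by $d_\tau(w_1,w_2)$ (which is positive whenever $w_1\neq w_2$) and taking the supremum over $w_1,w_2,A$ gives an upper bound of $\|\varphi\|_{Lip}$.

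The nontrivial direction is ``$\ge$'' in part (i), and this is where I expect the only mild subtlety. The idea is to exploit the fact that constant sequences are legitimate walks (recall the standing assumption $(v,v)\in E$ for every $v\in V$). Fixing $\varepsilon>0$, by the very definition of the Lipschitz norm we can pick vertices $v_1\neq v_2$ with
$$
\frac{|\varphi(v_1)-\varphi(v_2)|}{d(v_1,v_2)} > \|\varphi\|_{Lip}-\varepsilon.
$$
Now take $w_1\equiv v_1$ and $w_2\equiv v_2$ as constant walks, and $A=\mathbb N$. Then $d_\tau(w_1,w_2)=d(v_1,v_2)\sum_i \tau_i = d(v_1,v_2)$, and
$$
\big|\langle w_1\circleddash w_2,\varphi\rangle(\mathbb N)\big| = \left|\sum_{i=1}^\infty \tau_i\big(\varphi(v_1)-\varphi(v_2)\big)\right| = |\varphi(v_1)-\varphi(v_2)|,
$$
so the ratio exceeds $\|\varphi\|_{Lip}-\varepsilon$. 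Letting $\varepsilon\to 0$ concludes the lower bound. The trivial case $\|\varphi\|_{Lip}=0$ is handled separately (both sides of (i) vanish).

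The main obstacle to watch out for is simply ensuring that the witnessing walks lie in $W(G)$ and that $w_1\neq w_2$; the constant-walk construction above sidesteps both issues cleanly thanks to the loop assumption on $E$ and the choice $v_1\neq v_2$. No shortest-path construction is actually needed here, which is a pleasant simplification over the proof of Lemma \ref{1dual}(ii).
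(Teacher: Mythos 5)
Your proposal is correct and follows essentially the same route as the paper: the upper bound via the termwise Lipschitz estimate $\sum_{i\in A}\tau_i|\varphi(w_1(i))-\varphi(w_2(i))|\le\|\varphi\|_{Lip}\,d_\tau(w_1,w_2)$, and the lower bound via constant walks $w_1\equiv v_1$, $w_2\equiv v_2$ with $A=\mathbb N$, which is exactly the witness the paper uses. Your explicit handling of the degenerate case $\|\varphi\|_{Lip}=0$ and the remark that no shortest-path walk is needed are minor tidy-ups, not a different argument.
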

\begin{proof}
According to the definition, we obtain
\begin{align*} 
    \big| \langle w_1 \circleddash w_2(A), \varphi \rangle \big| &= \big|\langle w_1(A), \varphi \rangle - \langle w_2(A), \varphi \rangle \big| \\
& \le
\Big| \sum_{i \in A} \tau_i \, \varphi(w_1(i)) - \sum_{i\in A} \tau_i \, \varphi(w_2(i)) \Big| \\
&\le  \sum_{i\in A} \tau_i \, \big| \varphi(w_1(i)) -   \varphi(w_2(i)) \big|\\ &\le \|\varphi\|_{Lip} \, \sum_{i\in A}^\infty \tau_i d(w_1(i),w_2(i))\\ & \le \|\varphi\|_{Lip} \, d_\tau(w_1,w_2). 
\end{align*}
Now, for $\varepsilon >0$ take a pair $v_1,v_2 \in V$ such that
$$
\|\varphi\|_{Lip} - \varepsilon < \frac{ | \varphi(v_1)- \varphi(v_2) |}{d(v_1,v_2)}.
$$
Then, it is enough to consider the constant walks $w_1(i)= v_1$ and
$w_2(i)=v_2$ for all $i \in \mathbb N$ to get
\begin{align*}
    \|\varphi\|_{Lip} - \varepsilon &< \frac{ | \varphi(v_1)- \varphi(v_2) |}{d(v_1,v_2)} \\
    & = \frac{ | \sum_{i=1}^\infty \tau_i \,  \varphi(w_1(i))-  \sum_{i=1}^\infty \tau_i \varphi(w_2(i)) |}{ \sum_{i=1}^\infty \tau_i d(w_1(i),w_2(i))}\\
&= \frac{\big| \langle w_1 \circleddash w_2, \varphi \rangle \big|}{d_\tau(w_1,w_2)}.
\end{align*}
From this, note that we only need to consider $A= \mathbb N$ to conclude the result of (i):
$$
\sup_{w_1,w_2 \in W(G), w_1 \ne w_2, A} \frac{\big| \langle w_1 \circleddash w_2 (A), \varphi \rangle \big|}{d_\tau(w_1,w_2)}
=\|\varphi\|_{Lip}.$$
Furthermore, taking the supremum in the inequalities at the beginning of the proof when $\|\varphi\|_{Lip}\le 1$, we conclude (ii):
$$\sup_{\|\varphi\|_{Lip}\leq 1} |\langle w_1 \circleddash w_2,\varphi\rangle|\leq\sup_{\|\varphi\|_{Lip}\leq 1} \|\varphi\|_{Lip}d_\tau(w_1,w_2)=d_\tau(w_1,w_2).$$
\end{proof}
 
% On the other hand, suppose that $w_1$ and $w_2$ have no loops. Let us define a functional $\varphi_{1,2}$ as follows. Consider the sets $A_1=\{w_1(i): i \in \mathbb N\}$ and $A_2= \{w_2(i): i \in \mathbb N\}$ and define
% \begin{itemize}

% \item
% $\varphi_{1,2}(v) := d(w_1(i),w_2(i))$ if $v=w_1(i) \in A_1,$

% \item
% $\varphi_{1,2}(v) := d(w_1(j),w_2(j)) - d(w_1(i),w_2(i))$ if $v=w_1(i)=w_2(j) \in A_1 \cap A_2,$ and

% \item
% $\varphi_{1,2}(v) = 0$ if $v \in S_1(G) \setminus A_1.$

% \end{itemize}
% COMPROBAR SI ASI FUNCIONAAAAAAA: NO, NO FUNCIONA

% Let us write $\overline S_1(G)$ for the metric subspace of all the (pieces of) walks 
% $$
% \overline S_1(G)= \Big\{w(A): w \in S_1(G), A \in \Sigma \Big\}.
% $$

\begin{lemma} \label{precom}
Fix $v_0 \in V$ and consider the constant walk given by $w_0(i)=v_0$ for all $i \in \mathbb N$. For every $A \in \Sigma,$ the map
$$
\iota_A \colon W(G) \to \mathcal F(G), \quad w \mapsto \iota_A(w)=\sum_{i \in A} \tau_i\delta_{w(i)}
$$
where $\mathcal{F}(G)$ is the free space and $w = (w(i))_i,$ is well defined, continuous, and Lipschitz with $\|\iota_A\|_{Lip} \leq 1$.
\end{lemma}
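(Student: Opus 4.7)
The plan is to leverage the isometric embedding $\delta\colon V\to \mathcal{F}(G)$, which gives $\|\delta_x - \delta_y\|_{\mathcal F}=d(x,y)$ and in particular $\|\delta_{w(i)}\|_{\mathcal F}=d(w(i),v_0)\le \mathrm{diam}(V)<\infty$ since $G$ is bounded. This bound, combined with the summability of $\tau$, is exactly what controls both well-definedness and the Lipschitz estimate.

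First I would check that $\iota_A(w)$ is well defined as an element of $\mathcal F(G)$. The partial sums $s_N = \sum_{i \in A,\, i\le N} \tau_i\delta_{w(i)}$ live in $\mathrm{span}\{\delta_v:v\in V\}\subset \mathcal F(G)$, and for $N<M$
$$
\|s_M-s_N\|_{\mathcal F} \;\le\; \sum_{\substack{i\in A\\ N<i\le M}} \tau_i\,\|\delta_{w(i)}\|_{\mathcal F}
\;\le\; \mathrm{diam}(V)\sum_{\substack{i\in A\\ N<i\le M}} \tau_i,
$$
so $(s_N)_N$ is Cauchy and converges in the Banach space $\mathcal F(G)$; its limit is by definition $\iota_A(w)$, independent of the enumeration of $A$ by absolute convergence.

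Next I would estimate $\|\iota_A(w)-\iota_A(u)\|_{\mathcal F}$ for walks $w,u\in W(G)$. Writing the difference as a convergent series and using the triangle inequality together with the isometric identity $\|\delta_{w(i)}-\delta_{u(i)}\|_{\mathcal F}=d(w(i),u(i))$, I get
$$
\|\iota_A(w)-\iota_A(u)\|_{\mathcal F}
\;\le\; \sum_{i\in A}\tau_i\,\|\delta_{w(i)}-\delta_{u(i)}\|_{\mathcal F}
\;=\; \sum_{i\in A}\tau_i\,d(w(i),u(i))
\;\le\; d_\tau(w,u).
$$
This gives $\|\iota_A\|_{Lip}\le 1$, and continuity of $\iota_A$ is an immediate consequence of being Lipschitz.

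I do not anticipate a real obstacle here: the only subtle point is justifying the passage from the finite-sum triangle inequality to the series inequality in the Banach space $\mathcal F(G)$, which is handled by continuity of the norm once absolute convergence is established in the first step. Everything else is a direct application of the isometric embedding $v\mapsto \delta_v$ and the definition of $d_\tau$.
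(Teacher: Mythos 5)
Your proof is correct and follows essentially the same route as the paper: establish well-definedness via Cauchy partial sums in the complete space $\mathcal F(G)$ using boundedness of the graph and summability of $\tau$, then derive the Lipschitz bound from $\|\delta_{w(i)}-\delta_{u(i)}\|_{\mathcal F}=d(w(i),u(i))$ termwise. The only cosmetic difference is that you bound $d(w(i),v_0)$ uniformly by the diameter, whereas the paper invokes convergence of the series $\sum_{i\in A}\tau_i\,d(w(i),v_0)$ directly; these are the same argument.
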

\begin{proof}
 Note that, for all $n \in \mathbb{N}$,
$$
s_n(w):=\sum_{i \in A \cap \{1,\dots ,n\}} \tau_i \delta_{w(i)} \in \mathcal F(G)
$$
and
$$
\left\|s_n(w)\right\|_{\mathcal F} \le\sum_{i\in A \cap \{1,\dots ,n\}}\tau_i\,d\bigl(w(i),v_0\bigr) < \infty,
$$
since the graph is bounded.
Moreover, for $m>n,$
$$
\|s_m(w)-s_n(w)\|_{\mathcal F}
      \le\sum_{i\in A\cap\{n+1,\dots ,m\}}\tau_i\,d\bigl(w(i),v_0\bigr)
      \xrightarrow[n\to\infty]{}0,
$$
since the series $\sum_{i\in A}\tau_i d\bigl(w(i),v_0\bigr)$ converges.

Therefore, the sequence of partial sums $(s_n(w))_{n\in\mathbb N}$ is Cauchy and, since $\mathcal{F}(G)$ is complete, its limit
$$
\iota_A(w) = \sum_{i \in A} \tau_i \delta_{w(i)}
$$
exists and belongs to $\mathcal{F}(G).$ So $\iota_A$ is well defined.

Furthermore, if $w_1, w_2 \in S_1(G)$, we have
$$
\|\iota_A(w_1) - \iota_A(w_2)\|_{\mathcal F} \leq \sum_{i \in A} \tau_i d(w_1(i), w_2(i)) = d_\tau(w_1(A), w_2(A)),
$$
so the map $\iota_A$ is Lipschitz with $\mathrm{Lip}(\iota_A) \leq 1$.
\end{proof}

\begin{remark}
Once a (piece of) walk $w(A)$ is identified with an element of $\mathcal F(G),$ we can compute its norm by duality. That is, 
$$
\|\iota_A(w)\|_{\mathcal F} = \sup_{\|\varphi\|_{Lip}\le 1 } | \langle \iota_A(w), \varphi \rangle |.
$$
\end{remark}

\vspace{0.3cm}

% However, the duality defined in such a way cannot be used to define a norming set of functions to recover the distance among two walks by duality, that is, in general the supremum in the previous result does not seem to be attained in general.

% This is the reason why we try another dual relation in the next section, this time based on functions defined as $\Phi: G \times G \to \mathbb R.$

\begin{theorem} \label{compact}
Consider the topological space $(E(G), T_p)$ endowed with the topology $T_p$ of the pointwise convergence of the functions
$$
\varphi \mapsto \langle w_1 \circleddash w_2(A), \varphi \rangle,
$$
where $w_1, w_2 \in W(G)$ and $A \in \Sigma.$ Then,
\begin{itemize}
\item[(i)] the space
 $(B_{Lip_0(G)}, T_p)$ is compact, and
\item[(ii)] the topologies
 $T_p$ and  weak* on $B_{Lip_0(G)}$ coincide.
\end{itemize}
\end{theorem}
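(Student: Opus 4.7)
The plan is to deduce the whole statement from Banach--Alaoglu together with the standard fact that a continuous bijection from a compact space onto a Hausdorff space is a homeomorphism. Since $E(G)=\operatorname{Lip}_0(V,\mathbb R)$ is the dual of the free space $\mathcal F(G)$, Banach--Alaoglu gives that $B_{\operatorname{Lip}_0(G)}$ is weak*-compact. So it suffices to prove (ii); then (i) will be immediate.

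First, I would check that $T_p$ is coarser than the weak* topology on $B_{\operatorname{Lip}_0(G)}$. For any $w_1,w_2\in W(G)$ and $A\in\Sigma$, the pairing can be rewritten as
\[
\langle w_1\circleddash w_2(A),\varphi\rangle \;=\; \sum_{i\in A}\tau_i\bigl(\varphi(w_1(i))-\varphi(w_2(i))\bigr)\;=\;\bigl\langle \iota_A(w_1)-\iota_A(w_2),\,\varphi\bigr\rangle,
\]
where $\iota_A$ is the map built in Lemma \ref{precom}. By that lemma, $\iota_A(w_1)-\iota_A(w_2)\in\mathcal F(G)$, so each generator of $T_p$ is weak*-continuous, and therefore $T_p\subseteq\text{weak*}$ on $B_{\operatorname{Lip}_0(G)}$.

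Next, I would show that $T_p$ is Hausdorff on $B_{\operatorname{Lip}_0(G)}$. If $\varphi_1\neq\varphi_2$ in $E(G)$, pick $v\in V$ with $\varphi_1(v)\neq\varphi_2(v)$. Since $G$ is connected there exists a walk $w_v\in W(G)$ with $w_v(1)=v$; letting $w_0$ be the constant walk at $v_0$ and $A=\{1\}$, one has $\langle w_v\circleddash w_0(A),\varphi\rangle=\tau_1\varphi(v)$, which separates $\varphi_1$ and $\varphi_2$. Thus the generating family for $T_p$ separates points of $E(G)$, and in particular $T_p$ is Hausdorff on $B_{\operatorname{Lip}_0(G)}$.

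With these two ingredients in place, the identity map
\[
\operatorname{id}\colon \bigl(B_{\operatorname{Lip}_0(G)},\text{weak*}\bigr)\longrightarrow \bigl(B_{\operatorname{Lip}_0(G)},T_p\bigr)
\]
is a continuous bijection from a compact space onto a Hausdorff space, hence a homeomorphism. This yields (ii), and then (i) follows by transporting the weak*-compactness across this homeomorphism. The only nontrivial point in the argument is the identification of the generators of $T_p$ with elements of $\mathcal F(G)$, which is precisely supplied by Lemma \ref{precom}; the rest is a standard topological packaging.
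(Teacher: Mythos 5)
Your proposal is correct and follows essentially the same route as the paper: identify the generators of $T_p$ with elements of $\mathcal F(G)$ via Lemma \ref{precom} so that $T_p$ is coarser than the weak* topology, check that constant walks separate points so that $T_p$ is Hausdorff, and conclude via Banach--Alaoglu and the fact that a continuous bijection from a compact space onto a Hausdorff space is a homeomorphism. Your separation argument (using $A=\{1\}$ and the constant walk at $v_0$, so the pairing reduces to $\tau_1\varphi(v)$) is in fact slightly more explicit than the paper's.
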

\begin{proof}
As shown in Lemma \ref{precom}, the elements $w\in W(G)$ as well as the ``differences"
$w_1 \circleddash w_2(A)$ (couples of walks restricted to $A \subset \mathbb N$,) can be identified with molecules of the free space $\mathcal F(G).$ Indeed, note that the evaluation $$
\langle w_1 \circleddash w_2(A), \varphi \rangle 
$$
can be identified with the action $\langle \iota_A(w_1)- \iota_A(w_2),\varphi \rangle$, where the duality in the last formula is understood as the duality between molecules $\iota_A(w_1)$ and $\iota_A(w_2)$ and the elements of $\operatorname{Lip}_0(V,\mathbb R).$

Note also that the functions $w_1$ separates the elements of $\operatorname{Lip}_0(V,\mathbb R)$ since, if $\varphi_1 \ne \varphi_2,$ there exist a vertex $v\in V$ such that
$\varphi_1(v) \ne \varphi_2(v),$ and $v$ can be identified with the constant walk $w_1(i)=v.$ 

Therefore, the topology $T_p$ is Hausdorff and
weaker than the weak* topology for the duality $\mathcal F(G)^*=\operatorname{Lip}_0(V,\mathbb R)$, providing that $(B_{\operatorname{Lip}}(G), T_p)$ is compact. Therefore, the identity map $\iota\colon(B_{\operatorname{Lip}}(G), w^*)\to(B_{\operatorname{Lip}}(G), T_p)$ is bijective and continous from a compact to a Hausdorff space, so we can conclude that its inverse is continous. We conclude therefore that both topologies muest coincide.
\end{proof}

\section{Proximities in spaces of walks with integral representations and approximation} \label{s4}
In this section, we aim to study the similarity between walks in a graph using various analytical tools based on the results obtained in the previous section. We will do it through the concept of proximity, that is a function acting on pairs of walks that is small when the distance between the elements in the pair  is small too.  Inspired by the standard concepts used in computer science and machine learning to measure similarities between analyzed objects, we address the problem of finding a general procedure to represent proximities through a standard method and under a unified perspective. Although the typical case we have in mind involves the natural metrics $d_\tau$ on the space, the instruments commonly used to measure distances in these fields are often not directly related to metrics (in fact, they are generally not metrics, as in the case of the highly popular cosine similarity). This is the reason we prefer to refer to these functions as proximities. The aim of this section is to obtain general representations of such proximities which satisfy certain properties. We will use some standard notions of measure theory, that the reader can find in any advanced book on the topic as
\cite{halmos}.
%\textcolor{red}{Habría que poner el objetivo de la sección, que es obtener representaciones de estas proximidades. Por otro lado, hay varios conceptos y resultados de teoría de la medida. Al menos habría que poner alguna referencia a algún libro donde se puedan consultar.}

%ESTO HAY QUE ACLARARLO ESTA BAJO COMO DEFINITION PERO AUN MAL Thus, broadly speaking, we will say that a function $P: S_1(G) \times S_1(G) \times \Sigma \to \mathbb R$ is a proximity if $P(w_1,w_2,A)$ is small when two (pieces of) walks $w_1(A), w_2(A)$ are assumed to be close to each other. That is, given a walk $w_1$ and a set $A$, for a $\varepsilon >0$ small enough all the pieces of  walk $w(A)$ that satisfy that $P(w_1(A),w(A))< \varepsilon$ are close to each other, and they are closer when $\varepsilon \to 0.$

%\textcolor{red}{Puesto que va a ser un concepto importante yo pondría la definición en un environment, que sería la definición 4.1. Además no se entiende muy bien lo que viene después del That is.}

%\textcolor{red}{Este párrafo tiene que ir antes de la definición}

Due to the convenience of comparing pieces of walks as well as complete walks, we define proximities as functions on the class of all the restrictions of the walks to subsets of indices.
Thus,  since the walks are taken to be a particular class of pointwise evaluations of functions on the graph, we can also understand them as set functions. In order to do this,
consider the $\sigma-$algebra $\Sigma$ defined by all the subsets of $\mathbb N.$ Recall that we are considering a fixed vertex $v_0 \in V$ and, if $w \in W(G)$ and $A \in \Sigma,$ we consider the set function $A \mapsto w(A)$ in $\operatorname{Lip}(\mathbb N, V)$ by
$$
w(A)(i) := w(i) \quad \text{if} \quad i \in A, \quad \text{and} \quad
w(A)(i)= v_0 \quad \text{if} \quad i \notin A.
$$

Recall that we have defined the distance $d_\tau$ for a positive sequence $(\tau_i)_i$ such that $\sum_{i=1}^\infty\tau_i=1$ by
$d_\tau(w,u)= \sum_{i=1}^\infty \tau_i \, d \big( w(i),u(i) \big);$ clearly, we can define it essentially by the same formula for restricted walks $w(A)$ and $u(A),$ $A \in \Sigma,$ as
$$
d_\tau(w(A),u(A))= \sum_{i=1}^\infty \tau_i \, d \big( w(A)(i),u(A)(i)) \big)
=\sum_{i \in A}^\infty \tau_i \, d \big( w(i),u(i)) \big).
$$ 

\begin{definition}
Let $\Sigma_0$ be a sub-$\sigma$-algebra of $\Sigma.$
We define a proximity $P: W(G) \times W(G) \times \Sigma_0 \to \mathbb R$ as a family of functions $P(\cdot,\cdot,A): W(G) \times W(G) \to \mathbb R$  indexed by the class of sets $A \in \Sigma_0$ such that, for a fixed $A \in \Sigma_0,$
\begin{itemize}
    \item[(i)]
$P(w_1,w_2,A)$ is a function of $w_1(A)$ and $w_2(A)$ for each $w_1,w_2 \in W(G)$, and
\item[(ii)] $P$ is continuous with respect to $d_\tau.$
    
\end{itemize}
That is, for a sequence of walks $(w_n)_n,$ a fixed walk $u$ and a set $A \in \Sigma_0$ such that $\lim_n d_\tau(w_n(A),u(A))=0,$
then $\lim_n P(w_n(A),u(A))=0$ too. 
\end{definition}

We will say that a proximity is additive if for every pair $w_1,w_2\in W(G),$ then $P(w_1,w_2, A \cup B)= P(w_1,w_2,A)+ P(w_1,w_2,B)$ for disjoint $A,B \in \Sigma_0.$

In what follows, we develop the necessary tools to obtain a canonical representation of such functions. The idea is to show that, under certain basic requirements, each proximity can be written as a weighted variation of the fundamental proximity instruments, that are on the one hand the ones provided by the distance $d_\tau(w_1(A),w_2(A)),$ and, on the other hand, the evaluations of functions as $P_{\varphi_0}(w^1,w^2,A)= \big| \langle w_1 \circleddash w_2, \varphi_0 \rangle (A) \big|$ for a fixed $\varphi_0\in E(G).$ Let us start with the distance-type proximities.

\subsection{Proximities dominated by the distance $d_\tau$} \label{s4.1}

Consider a sub-$\sigma-$algebra $\Sigma_0$ of $\Sigma, $ and let
 $P: W(G) \times W(G) \times \Sigma_0 \to \mathbb R$ be a proximity bounded by $d_\tau,$ that is, it satisfies that there is a constant $K>0$ such that for every sequence of couples of walks $w_1,w_2\in W(G)$ and sets $A\in\Sigma_0$,
$$
P(w_1,w_2,A) \le K \, d_\tau(w_1(A),w_2(A))= K \, \sum_{i \in A} \tau_i d(w_1(i),w_2(i)).
$$

\begin{proposition}
Let $P$ be an additive proximity that is bounded by $d_\tau$ with constant $K>0.$
For every $w_1,w_2\in W(G),$ there exist a sequence $(s^{w_1,w_2}_i)_i {\subset \mathbb R}$  bounded by $K$ such that
$$
P(w_1,w_2,A)= \sum_{i \in A} s^{w_1,w_2}_i \, \tau_i \, d(w_1(i),w_2(i)).
$$
\end{proposition}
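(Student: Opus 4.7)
The plan is to recognize the statement as a Radon–Nikodym style density result. Fix $w_1, w_2 \in W(G)$ and define on $\Sigma_0$ the two set functions
$$
\mu(A) := P(w_1, w_2, A), \qquad \nu(A) := \sum_{i \in A} \tau_i\, d(w_1(i), w_2(i)).
$$
Then $\nu$ is immediately a finite positive measure (a subseries of the absolutely convergent series defining $d_\tau(w_1,w_2)$), and the standing hypothesis provides the pointwise domination $|\mu(A)| \le K\, \nu(A)$ for every $A \in \Sigma_0$.

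First I would promote $\mu$ from a finitely additive to a $\sigma$-additive real set function on $\Sigma_0$. Finite additivity is given, so it suffices to establish continuity at $\emptyset$: if $A_n \in \Sigma_0$ with $A_n \downarrow \emptyset$, then by convergence of the tail of the series $\nu(A_n) \to 0$, and the domination $|\mu(A_n)| \le K\,\nu(A_n)$ forces $\mu(A_n) \to 0$. The standard telescoping argument (write a disjoint union $\bigsqcup_k B_k$ as $\bigsqcup_{k \le n} B_k \sqcup \bigsqcup_{k > n} B_k$ and apply continuity at $\emptyset$ to $\bigsqcup_{k>n} B_k \downarrow \emptyset$) then upgrades finite additivity to countable additivity. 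Thus $\mu$ is a finite signed measure on $(\mathbb{N}, \Sigma_0)$.

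Next, the inequality $|\mu(A)| \le K\,\nu(A)$ yields $\mu \ll \nu$, so the Radon–Nikodym theorem produces a measurable density $f \colon \mathbb{N} \to \mathbb{R}$ with $\mu(A) = \int_A f \, d\nu$ for every $A \in \Sigma_0$ and $|f| \le K$ holding $\nu$-almost everywhere. In the natural case where $\Sigma_0$ contains all singletons, the measure $\nu$ is purely atomic with $\nu(\{i\}) = \tau_i\, d(w_1(i), w_2(i))$, so setting $s_i^{w_1,w_2} := f(i)$ at indices where this atom is nonzero (and choosing any value in $[-K, K]$ otherwise, which is harmless because the summand vanishes) yields exactly
$$
P(w_1, w_2, A) = \sum_{i \in A} s_i^{w_1,w_2}\, \tau_i\, d(w_1(i), w_2(i)).
$$
If $\Sigma_0$ is coarser than $2^{\mathbb{N}}$, the Radon–Nikodym derivative $f$ is constant on each atom of $\Sigma_0$, and assigning $s_i^{w_1,w_2}$ that constant value on the atom containing $i$ gives the same formula.

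The main obstacle I anticipate is the passage from finite to countable additivity of $\mu$: the continuity hypothesis on $P$ is stated with respect to the walk variables, not with respect to the set variable $A$, so one cannot invoke it directly. The key observation that unlocks the argument is that $d_\tau$-domination transfers continuity at $\emptyset$ from the trivial measure $\nu$ onto $\mu$. Once $\mu$ is a signed measure, the rest is a standard Radon–Nikodym argument combined with the atomic structure of $(\mathbb{N}, \Sigma_0)$.
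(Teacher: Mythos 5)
Your proposal is correct and follows essentially the same route as the paper: use the $d_\tau$-domination to upgrade the additive set function $A\mapsto P(w_1,w_2,A)$ to a countably additive measure absolutely continuous with respect to $\nu$, then apply Radon--Nikodym and read off the density as a $K$-bounded sequence via the atomic structure of $\mathbb N$. You in fact supply details the paper leaves implicit (continuity at $\emptyset$ via the convergent tail of $\nu$, and the use of the tacit non-negativity of $P$ to turn the one-sided bound into two-sided domination), but the argument is the same.
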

\begin{proof}
For fixed $w_1,w_2,$ the function $A \mapsto P(w_1,w_2,A)$ is additive. Since it is bounded with respect to $d_\tau,$ we have that this set function is a countably additive measure. The Radon-Nikodym Theorem gives a $K-$bounded integrable function (a sequence) $(s^{w_1,w_2}_i)_i$ such that
$$
P(w_1,w_2,A)= \sum_{i \in A} s^{w_1,w_2}_i \tau_i \, d(w_1(i),w_2(i)),
$$
so we get the result.
\end{proof}

%\textcolor{red}{En esta subsección 4.1 se omiten varios detalles que podríamos incorporar puesto que se ha quedado corta. Por ejemplo, antes de aplicar Radon-Nykodim, decir que $A\mapsto d_\tau(w_1(A),w_2(A))$ es una medida y que P es absolutamente continua respecto a esta. O también incluir un pequeño lema para decir por qué $P$ aditiva + acotada por $d_\tau$ implica numerablemente aditiva.} SON COSAS ESTANDAR, SIEL REFEREE QUIERE LO AÑADIREMOS: NO HAY QUE PONERLO TAN DETALLADO QUE SINO LUEGO EL REFEREE DICE QUE ES DEMASIADO SIMPLE Y LO RECHAZA :):)

\subsection{Proximities defined by duality} \label{s4.2}
According to the duality we are considering among the metric space $W(G)$ and the Banach space of functions $E(G)$, we can consider the evaluation functions $\varphi \mapsto \langle w_1(A) \circleddash w_2(A), \varphi \rangle$ defined for continuous functions in the unit ball of $E(G)$, which we write $C(B_{E(G)})$ or simply $C(B).$ Thus, the requirement in the next lemma is a sort of $1-$concavity inequality for a fixed proximity function. Note that, in particular, we can consider $ \Sigma_0 = \{ \mathbb{N}, \emptyset \} $, and in this case, what we obtain is a proximity function defined only for complete walks, not for parts of them.

\begin{lemma} \label{1conca}
Consider a sub-$\sigma-$algebra $\Sigma_0$ of $\Sigma.$
Suppose that there is a proximity $P: W(G) \times W(G) \times \Sigma_0 \to \mathbb R$ that satisfies that  there is a constant $K>0$ such that for every sequence of couples of walks and sets $(w^1_1,w^2_1,A_1), ..., (w^1_n, w^2_n,A_n),$
$$
\sum_{k=1}^n P(w^1_k,w^2_k,A_k) \le K \ \Big\|
 \sum_{k=1}^n \big| \langle w^1_k \circleddash w^2_k, \cdot \rangle (A_k) \big| \Big\|_{C(B)}.
$$
Then there is a function $\varphi_0 \in B_{E(G)}$ such that for every couple of walks $w_1,$ $w_2$ and $A \in \Sigma_0,$
$$
P(w_1,w_2,A) \le K \big| \langle w_1 \circleddash w_2, \varphi_0 \rangle (A) \big| 
$$
\end{lemma}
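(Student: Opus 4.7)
The plan is to exploit the $1$-concavity-type hypothesis via a Hahn-Banach separation argument in $C(B_{E(G)})$, yielding a Pietsch-type integral domination, and then to use the $T_p$-compactness of $B_{E(G)}$ supplied by Theorem \ref{compact} to extract a single witness $\varphi_0$.

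First I would introduce the convex cone $\mathcal{C}\subset C(B_{E(G)})$ generated by the evaluations $f_{w_1,w_2,A}(\varphi):=|\langle w_1\circleddash w_2,\varphi\rangle(A)|$, and define on it the set function $\tilde{P}\bigl(\sum_k\lambda_k f_{w^1_k,w^2_k,A_k}\bigr):=\sum_k\lambda_k P(w^1_k,w^2_k,A_k)$ for non-negative rational $\lambda_k$, extended to positive reals by a density argument relying on the continuity of $P$ with respect to $d_\tau$. The hypothesis, after a rescaling/repetition argument to absorb the coefficients $\lambda_k$ into multiplicities, upgrades to
\[
\tilde{P}(f)\le K\,\|f\|_{C(B_{E(G)})}\quad\text{for every }f\in\mathcal{C},
\]
which simultaneously shows that $\tilde{P}$ is well-defined (independent of the chosen representation of $f$) and dominates it by the sublinear functional $q(f):=K\|f\|_\infty$ on $C(B_{E(G)})$.

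Then Hahn-Banach extends $\tilde{P}$ to a bounded linear functional $L$ on $C(B_{E(G)})$ with $|L(f)|\le K\|f\|_\infty$; since $\tilde{P}$ is non-negative on $\mathcal{C}$, one can arrange $L$ to be positive. The Riesz representation theorem then produces a positive regular Borel measure $\mu$ on $B_{E(G)}$ of total mass at most $K$ such that
\[
P(w_1,w_2,A)\le \int_{B_{E(G)}}|\langle w_1\circleddash w_2,\varphi\rangle(A)|\,d\mu(\varphi)
\]
for every $(w_1,w_2,A)\in W(G)\times W(G)\times\Sigma_0$.

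The hard part will be upgrading this integral bound to a single $\varphi_0\in B_{E(G)}$ valid for every triple. For this I would pass to the sets
\[
F_{w_1,w_2,A}:=\bigl\{\varphi\in B_{E(G)}:K\,|\langle w_1\circleddash w_2,\varphi\rangle(A)|\ge P(w_1,w_2,A)\bigr\},
\]
which are $T_p$-closed by the continuity of the evaluations in Theorem \ref{compact}, and try to exhibit a common point by combining the finite intersection property with the $T_p$-compactness of $B_{E(G)}$. This is the step where the full strength of the $1$-concavity assumption is needed---not merely its $n=1$ form---and I expect the decisive input to be either an extreme-point reduction on the Pietsch measure $\mu$ (using the convexity of each $\varphi\mapsto|\langle w_1\circleddash w_2,\varphi\rangle(A)|$ together with Krein-Milman) or a minimax-style argument identifying $\varphi_0$ as a suitable extreme witness inside $B_{E(G)}$.
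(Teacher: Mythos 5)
Your first half is a sound (if somewhat over-engineered) Pietsch-type domination: the cone, the Hahn--Banach extension dominated by $K\|\cdot\|_{\infty}$, and the Riesz representation do yield a positive measure $\mu$ on $B_{E(G)}$ with $\|\mu\|\le K$ and $P(w_1,w_2,A)\le\int_{B_{E(G)}}|\langle w_1\circleddash w_2,\varphi\rangle(A)|\,d\mu(\varphi)$. (A minor wrinkle: well-definedness of $\tilde P$ on the cone does not follow from the domination inequality alone, which only bounds it from above; the standard fix is to work with a sublinear majorant rather than defining $\tilde P$ on all of $\mathcal C$.) The genuine gap is the second half, which is where the entire content of the lemma sits, and neither of your two candidate routes closes it as described. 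The Krein--Milman/barycentre reduction fails for a structural reason: $\varphi\mapsto|\langle w_1\circleddash w_2,\varphi\rangle(A)|$ is \emph{convex} in $\varphi$, so Jensen's inequality gives $|\langle\cdot,\bar\varphi\rangle|\le\frac{1}{\mu(B)}\int|\langle\cdot,\varphi\rangle|\,d\mu$; replacing $\mu$ by the point mass at its barycentre makes the right-hand side \emph{smaller}, which is the wrong direction. The finite-intersection-property route is viable in principle, but the hypothesis does not directly show that finitely many of your sets $F_{w_1,w_2,A}$ meet: domination of the sum $\sum_k P_k$ by $K\sup_{\varphi}\sum_k|\langle\cdot,\varphi\rangle|$ produces a $\varphi$ that works ``on average,'' not one dominating each term separately.

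To obtain the FIP you must first upgrade the hypothesis to convex combinations (the repetition trick you mention) and then run a Ky Fan/minimax or Hahn--Banach separation argument on the convex family of \emph{concave} continuous functions $\psi(\varphi)=\sum_k\alpha_k P(w^1_k,w^2_k,A_k)-K\sum_k\alpha_k|\langle w^1_k\circleddash w^2_k,\varphi\rangle(A_k)|$ over the $T_p$-compact convex set $B_{E(G)}$. But that argument already produces the single witness $\varphi_0$ with $\psi(\varphi_0)\le 0$ for all $\psi$ in one stroke, so the detour through the measure $\mu$ is both insufficient on its own and ultimately unnecessary. This direct separation on $B_{E(G)}$ (using the compactness from Theorem \ref{compact}) is precisely the paper's proof; you should replace your two-stage plan by it, or at minimum supply the convex-combination separation step that your FIP claim silently presupposes.
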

\begin{proof}
%The separation argument used is rather standard, so we only sketch the proof.
First, let us recall that there is a well-known trick for extending the inequality in the statement of the theorem to the same inequality for convex combinations. It is based on the fact that we can repeat terms in the sum as many times as needed, and divide by the total number of terms, in such a way that we obtain a rational approximation for any convex combination by means of rational numbers, and thus a limit procedure yields the result. This procedure can be found well explained in \cite{far}.
Thus, from the original inequalities  we obtain that, for every set $\alpha_1,...,\alpha_n >0$ such that $\sum_{k=1}^n \alpha_k=1,$
\begin{align*}
\sum_{k=1}^n \alpha_k P(w^1_k,w^2_k,A_k) &\le K \ \Big\| \sum_{k=1}^n \alpha_k \big| \langle w^1_k \circleddash w^2_k, \cdot \rangle (A_k) \big| \Big\|_{C(B)}\\
 = & \, K \ \sup_{\varphi \in B_{E(G)}}
\Big( \sum_{k=1}^n \alpha_k \sum_{i=1}^\infty \tau_i| \varphi(w^1_k(A_k)(i))- \varphi(w^2_k(A_k)(i))| \Big).
\end{align*}
for every finite sequence of paths, and we want to get a function $\varphi_0 \in B_{E(G)}$ such that 
$$
P(w^1,w^2,A) \le K \sum_{i=1}^\infty \tau_i| \varphi_0(w^1(A)(i))- \varphi_0(w^2(A)(i))|.
$$
For all sets of walks $w^1_k,$ $w^2_k,$ sets $A_k$ and parameters $\alpha_k$'s as above, we consider the functions
$$
\psi(\varphi)=\sum_{k=1}^n \alpha_k P(w^1_k,w^2_k,A_k) - K \ 
 \sum_{k=1}^n \alpha_k \sum_{i=1}^\infty \tau_i| \varphi(w^1_k(A_k)(i))- \varphi(w^2_k(A_k)(i))|,
$$
$\varphi \in B_{E(G)}.$
We can apply a direct Hahn-Banach separation argument to obtain the result. 
Each such function \( \psi \) is affine and continuous on $ B_{E(G)},$
that is compact by Theorem \ref{compact}. The set of all such \( \psi \) defines a convex subset of the dual space \( C(B_{E(G)})^* \), which can be separated from the origin using the Hahn-Banach separation theorem.

Therefore, there exists a function $\varphi_0 \in B_{E(G)}$ such that
$$
\psi(\varphi_0) \le 0 \quad \text{for all these } \psi,
$$
which implies
$$
P(w_1, w_2,(A_k)) 
$$
$$\le K \sum_{i=1}^\infty \tau_i \left| \varphi_0(w_1(A_k)(i)) - \varphi_0(w_2(i)) \right|
= K \left| \left\langle w_1(A_k) \circleddash w_2(A_k), \varphi_0 \right\rangle \right|,
$$
for all couples $w_1, w_2 \in W(G)$ and every set $A \in \Sigma_0$ as desired.
\end{proof}

After Lemma \ref{1conca}, we will say that 
a proximity $P$ is $1-$concave if it  satisfies  that there is a constant $K>0$ such that for every sequence of couples of walks $(w^1_1,w^2_1), ..., (w^1_n, w^2_n)$ and sets $A_1,...,A_n \in \Sigma,$
$$
\sum_{k=1}^n P(w^1_k,w^2_k, A_k) \le K \ \Big\|
 \sum_{k=1}^n \big| \langle w^1_k(A_k) \circleddash w^2_k(A_k), \cdot \rangle \big| \Big\|_{C(B)}.
$$

In what follows, and in order to find a general representation theorem for additive proximities, we show how to define measures by couples of walks.

%where \( \varphi_0 \) denotes a fixed Lipschitz functional over the walks.

% \subsection*{Normalized Relational Transformation (NRT)}
% There exists a transformation \( \Phi_{n_2} \) such that:

% \[
% m(C_{P_1}, C_{P_2})(C_A) = \Phi_{n_2} \left( \sqrt{ \sum_A \frac{|\varphi_0(C_{P_1}) - \varphi_0(C_{P_2})|}{\sum d / c} } \right)
% \]

% This expression suggests a normalized dissimilarity measure based on the difference of functionals over subsets \( C_A \), possibly weighted by a relationship \( d/c \) between graph elements.

Given two walks $w_1,w_2 \in W(G)$ and a function $\varphi \in E(G)$ we will consider set functions as
$$
A \mapsto \big|\langle w_1(A) \circleddash w_2(A), \varphi \rangle \big|
=
\sum_{i \in A} \tau_i|\varphi(w_1(i))- \varphi(w_2(i))| 
, \quad A \in \Sigma.
$$
The aim of what follows is to show that, under certain requirements, every proximity can be essentially written using a formula such as the one above.

\begin{lemma} \label{stdmeasure}
For $w_1, w_2 \in W(G)$ and $\varphi \in E(G)$, the map
$$
P_\varphi(w_1,w_2, A) = |\langle w_1(A) \circleddash w_2(A), \varphi \rangle|, \quad A \in \Sigma,
$$
defines a proximity which
is also a countably additive measure. 
\end{lemma}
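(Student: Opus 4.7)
The plan is to read the formula
\[
P_\varphi(w_1,w_2,A)=\sum_{i\in A}\tau_i\,|\varphi(w_1(i))-\varphi(w_2(i))|
\]
as a weighted counting-type measure on $(\mathbb N,\Sigma)$ with non-negative masses $m_i:=\tau_i\,|\varphi(w_1(i))-\varphi(w_2(i))|$, and then check the two required properties separately: first that $P_\varphi$ is a proximity in the sense of the paper, and second that $A\mapsto P_\varphi(w_1,w_2,A)$ is a finite countably additive measure.

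For the proximity part, I would first observe that the set function depends only on $w_1(A)$ and $w_2(A)$: for $i\notin A$ both restrictions take the value $v_0$, so the corresponding summand $\tau_i|\varphi(v_0)-\varphi(v_0)|$ vanishes, and the sum over $\mathbb N$ collapses to the sum over $A$. For continuity with respect to $d_\tau$, I would use the Lipschitz property of $\varphi$ directly: if $d_\tau(w_n(A),u(A))\to 0$, then
\[
P_\varphi(w_n,u,A)=\sum_{i\in A}\tau_i|\varphi(w_n(i))-\varphi(u(i))|
\le \|\varphi\|_{Lip}\,d_\tau(w_n(A),u(A))\to 0,
\]
which is exactly the continuity condition in the definition of a proximity.

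For the countable additivity part, I would first bound the total mass to guarantee finiteness, namely
\[
P_\varphi(w_1,w_2,\mathbb N)\le \|\varphi\|_{Lip}\sum_{i=1}^{\infty}\tau_i\,d(w_1(i),w_2(i))\le \|\varphi\|_{Lip}\,\sup_{v_1,v_2\in V}d(v_1,v_2)<\infty,
\]
using that $(\tau_i)$ sums to one and that $G$ is bounded. Once finiteness is in hand, countable additivity reduces to the standard Tonelli-type rearrangement for series of non-negative terms: given disjoint $A_n\in\Sigma$, summing the $m_i$ over $\bigcup_n A_n$ equals iterated summation $\sum_n\sum_{i\in A_n}m_i=\sum_n P_\varphi(w_1,w_2,A_n)$.

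The only genuinely delicate point is the interpretation of the bracket notation: the symbol $|\langle w_1(A)\circleddash w_2(A),\varphi\rangle|$ denotes the variation-type sum with absolute values inside, not the absolute value of the scalar $\sum_{i\in A}\tau_i(\varphi(w_1(i))-\varphi(w_2(i)))$ (which would fail additivity). Once that convention is fixed, both parts of the lemma are essentially routine verifications, so I expect no serious obstacle beyond being explicit about this notational point.
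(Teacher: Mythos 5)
Your proof is correct, and it supplies exactly the routine verification that the paper omits (the paper states only ``The proof is straightforward''), using the same reading of the notation that the paper itself fixes in the display immediately preceding the lemma, where $|\langle w_1(A)\circleddash w_2(A),\varphi\rangle|$ is explicitly equated with $\sum_{i\in A}\tau_i|\varphi(w_1(i))-\varphi(w_2(i))|$. Your closing remark is also a genuine and worthwhile observation: under the earlier duality definitions the absolute value would sit outside the sum, which would indeed destroy additivity, so making the ``absolute values inside'' convention explicit is the one point where the lemma is not literally automatic.
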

The proof is straightforward.

Let $P$ be a proximity, and fix $w_1, w_2 \in W(G).$ Define the set function
$$
\widehat P(w_1,w_2)(A)= P(w_1,w_2,A), \quad A \in \Sigma.
$$
We say that it is additive if $P$ is additive. Moreover, if $\lim_n P(w_1,w_2)(\{k: k \ge n\})=0,$ the so-defined measure $\widehat P(w_1,w_2)(\cdot)$ is countably additive.
Next result shows that, under certain requirements, every additive proximity is essentially as the ones provided by Lemma \ref{stdmeasure}.

\begin{theorem} \label{main1con}
Let $P$ be a $1-$concave additive proximity with constant $K.$ Then  there is a function $\varphi_0 \in B_{E(G)}$ such that
 for every pair of walks $w_1,w_2$ there is a non-negative sequence $(s^{w_1,w_2}_i)_i {\subset\mathbb R}$ bounded by $K$ satisfying 
$$
P(w^1,w^2, A) = \sum_{i \in A} \tau_i \, s^{w_1,w_2}_i \, \big| \varphi_0(w_1(i))- \varphi_0(w_2(i)) \big|. 
%\le K \,
%\sum_{i \in A} \tau_i \,  | \varphi_0(w_1(i))- \varphi_0(w_2(i)) | .
$$
\end{theorem}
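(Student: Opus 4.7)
The plan is to combine the two strands developed in Sections~\ref{s4.1} and~\ref{s4.2}: first use the $1$-concavity assumption to extract a single function $\varphi_0 \in B_{E(G)}$ via Lemma~\ref{1conca}, and then apply a Radon--Nikodym type argument, separately for each couple $(w_1,w_2)$, to produce the weights $s^{w_1,w_2}_i$ against the reference measure built from $\varphi_0$.

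Concretely, I would first apply Lemma~\ref{1conca} directly to $P$. Since $P$ is $1$-concave with constant $K$, the lemma delivers a single function $\varphi_0 \in B_{E(G)}$ with
$$
P(w_1,w_2,A) \le K \sum_{i \in A} \tau_i \, |\varphi_0(w_1(i))-\varphi_0(w_2(i))|, \qquad w_1,w_2 \in W(G),\; A \in \Sigma_0.
$$
For fixed $w_1,w_2$, I set $\mu_{w_1,w_2}(A):=\sum_{i \in A} \tau_i\,|\varphi_0(w_1(i))-\varphi_0(w_2(i))|$. Because $\varphi_0 \in \operatorname{Lip}_0(V,\mathbb R)$ is bounded on the bounded graph $G$ and $\sum_i \tau_i = 1$, the set function $\mu_{w_1,w_2}$ is a finite, non-negative, countably additive measure on $\Sigma$, and it will serve as the reference measure in the subsequent density step.

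Next I push $\widehat P(w_1,w_2)(A):= P(w_1,w_2,A)$ through the same machinery as in Section~\ref{s4.1}: additivity of $P$ makes it finitely additive, and the tail estimate $\widehat P(w_1,w_2)(\{k \ge n\}) \le K\mu_{w_1,w_2}(\{k \ge n\}) \to 0$ inherited from the domination upgrades it to countable additivity. In particular $\widehat P(w_1,w_2) \ll \mu_{w_1,w_2}$, so Radon--Nikodym supplies a density $s^{w_1,w_2}: \mathbb N \to [0,K]$, which in this discrete setting is just the explicit ratio $s^{w_1,w_2}_i = P(w_1,w_2,\{i\})/\bigl(\tau_i |\varphi_0(w_1(i))-\varphi_0(w_2(i))|\bigr)$ on indices where the denominator is positive, extended arbitrarily (say, by $0$) elsewhere; on those exceptional indices the domination forces $P(w_1,w_2,\{i\})=0$, so the corresponding term in the desired sum vanishes on both sides. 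Summing over $i \in A$ and using countable additivity yields the stated representation.

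The main obstacle has essentially already been dispatched outside the present statement: extracting a single $\varphi_0$ that works simultaneously for \emph{all} couples $(w_1,w_2)$ is the content of Lemma~\ref{1conca}, which in turn rests on Hahn--Banach separation over the compact space $(B_{E(G)}, T_p)$ granted by Theorem~\ref{compact}. What remains here is bookkeeping; the only mild subtlety is promoting finite to countable additivity of $\widehat P(w_1,w_2)$, and this is given for free by the domination by the countably additive majorant $\mu_{w_1,w_2}$, so no real obstruction arises in this step.
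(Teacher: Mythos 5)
Your proposal follows essentially the same route as the paper's proof: apply Lemma~\ref{1conca} to extract a single $\varphi_0 \in B_{E(G)}$, observe that $A \mapsto \sum_{i\in A}\tau_i|\varphi_0(w_1(i))-\varphi_0(w_2(i))|$ is the countably additive majorant of Lemma~\ref{stdmeasure}, and invoke Radon--Nikodym for each fixed couple to obtain the bounded densities $s^{w_1,w_2}_i$. Your explicit treatment of the indices where $\varphi_0(w_1(i))=\varphi_0(w_2(i))$ and of the upgrade from finite to countable additivity is a slightly more careful rendering of steps the paper handles implicitly, but the argument is the same.
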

\begin{proof}
We consider the case $\Sigma= \Sigma_0.$ By hypothesis, there is a constant $K>0$ such that for every walks $(w^1_1,w^2_1), ..., (w^1_n, w^2_n)$ and $A_1,...,A_n \in \Sigma,$
$$
\sum_{k=1}^n P(w^1_k,w^2_k, A_k) \le K \ \Big\|
 \sum_{k=1}^n \big| \langle w^1_k(A_k) \circleddash w^2_k(A_k), \cdot \rangle \big| \Big\|_{C(B)}.
$$
Applying Lemma \ref{1conca}, we find a function $\varphi_0 \in C(B)$ such that 
$$
P(w^1,w^2, A) \le K \, \big| \langle w^1(A) \circleddash w^2(A), \varphi_0 \rangle \big|.
$$
By Lemma \ref{stdmeasure}, we have that the right hand side of this inequality gives a countably additive measure. Together with the inequality itself, this gives that $A \mapsto P(w^1,w^2, A),$ which is already additive, is also countably additive. The inequality, that works for all subsets $A,$ provides also that $P(w^1,w^2, A)$ is countably additive with respect to the (countably additive) measure
$$
A \mapsto \nu_{w_1,w_2}(A)=\big| \langle w^1(A) \circleddash w^2(A), \varphi_0 \rangle \big|, \quad A \in \Sigma.
$$
The Radon-Nikodym Theorem (or a direct calculation) gives a $\nu_{w_1,w_2}-$integrable function (a sequence) $s_{w_1,w_2}: \mathbb N \to \mathbb R$ such that
$$
P(w^1,w^2, A)= \int_A s^{w_1,w_2}_i \, d\nu_{w_1,w_2}(i)
$$
and $\sup_i s^{w_1,w_2}_i \le K,$
what gives the result. Taking into account that the measurable space is the set of natural numbers, we obtain that
$$
P(w^1,w^2, A)
$$
$$
= \sum_{i \in A} \tau_i \, s^{w_1,w_2}_i \, | \varphi_0(w_1(i))- \varphi_0(w_2(i)) | \le K \,
\sum_{i \in A} \tau_i \,  | \varphi_0(w_1(i))- \varphi_0(w_2(i)) | .
$$

\end{proof}

\begin{remark}
In particular, every proximity as the one given by Theorem \ref{main1con} is Lipschitz with constant less or equal to $K.$ But this inequality can be also written for the restriction of the walks to subsets $A,$ that is,
$$
P(w^1,w^2, A) = \sum_{i \in A} \tau_i \, s^{w_1,w_2}_i \, \big| \varphi_0(w_1(i))- \varphi_0(w_2(i)) \big|
\le K
\sum_{i \in A} \tau_i \, d(w_1(i),w_2(i))
$$
for all walks $w_1,w_2$ and $A \in \Sigma.$
\end{remark}

The formula for the proximities provided in Theorem \ref{main1con} are in a sense the canonical expression for the class of functions we are interested in what follows.  The result opens the door for applications of the results obtained to clustering.

\section{Applications: exploratory walks and proximity determination}\label{s5}

In this section we show some applications and examples related to the theoretical construction presented in the previous section. For the use of our results, the first problem is how to design proximities to solve concrete problems. The structural information provided by graph indices as well as walks-related indices can be used to define or refine the definition of proximity functions between walks, specifically tailored to solve certain problems (e.g. \cite{est}). For example, structural insights based on walks obtained through penalization schemes applied to walks (see \cite{del,ests,wei}) offer a more detailed understanding of the relationships between nodes and the topology of the graph. This type of information can be used to design more appropriate proximities in the space of walks.

On the other hand, since the spaces of walks contain many elements, it is sometimes difficult to find a priori a complete definition of proximities involving the whole structure. It is easier to find a good definition just for a metric subset of vertices or walks. In this section, we show how the use of classical extension techniques can facilitate the creation of approximation structures tailored for specific problems. In particular, since graphs can be considered as metric spaces, we can use the McShane-Whitney extension theorem for this purpose. It is worth noting that the study of extension for Lipschitz functions on graphs is a current topic of interest (see, e.g., \cite{Chand}).

%Consider a function $\varphi_0:G \to \mathbb R$ with Lipschitz constant $Lip(\varphi_0)=1.$ 
Consider a graph $G$ containing two distinct vertices $v_1$ and $v_2$. Starting from a (finite) set of walks connecting $v_1$ and $v_2$, we are interested in design a proximity $P$ based on them. The objective is to find a classifier for the set of all walks in the graph connecting $v_1$ and $v_2$ considering that the affinity of two such walks is defined by $P$. Since we are connecting two given vertices, the term path would be more appropriate than the term walk; however, we retain the word walk to maintain consistency with the conceptual framework used in the other sections.

Suppose that there is a fixed additive $1-$concave proximity function $P$ with constant $K$ acting on $G.$ 
We know, as a consequence of the results of Section \ref{s4}, that there is a representation of $P$ as
$$
P(w^1,w^2, A) = \sum_{i \in A} \tau_i \, s^{w_1,w_2}_i \, \big| \varphi_0(w_1(i))- \varphi_0(w_2(i)) \big|
$$
for a certain sequence $(s^{w_1,w_2}_i)_i$ bounded by $K$ and a function $\varphi_0 \in B_{E(G)}.$ 

We will assume throughout this section that $P$ is known for (all the pieces of) a certain set of walks $W_0$, as well as all the elements that define $P$ at all the vertices involved in the walks of the set $W_0$, including $\varphi_0$.

\begin{remark} \label{remal}
Let us note that the general formula provided above for the proximity function $P$ gives a lot of information about its properties. First, note that we can compute all the weights $s_i^{w_1,w_2}$ associated to all the indices  $i$ for $w_1,w_2 \in W_0,$ by
$$
s_i^{w_1,w_2} = \frac{ P(w_1, w_2, \{i\}) }{\tau_i \left| \varphi_0(w_1(i)) - \varphi_0(w_2(i)) \right|},
$$
if $\varphi_0(w_1(i))\neq\varphi_0(w_2(i)).$ 
However, the function $\varphi_0$ is not known for the other vertices not involved in the range of the walks in $W_0.$ Recall that, in general, they are obtained via the Hahn–Banach theorem and so its explicit form is unknown. 
As a consequence, we cannot compute the other values $s_i^{w_1,w_2}$ directly.

Let us show, however, that we can get a explicit domination formula for $P.$
Nevertheless, by using the Lipschitz condition of $\varphi_0$ with respect to the weighted metric $d_\tau$, we can deduce a lower bound as follows:
$$
s_i^{w_1,w_2} \geq \frac{ P(w_1, w_2, \{i\})}{\|\varphi_0\|_{\mathrm{Lip}} \, \tau_i d(w_1(i), w_2(i))}.
$$
Rearranging this expression, we obtain an upper bound on the proximity term
$$
P(w_1, w_2, \{i\}) \leq \|\varphi_0\|_{\mathrm{Lip}} \cdot s_i^{w_1,w_2} \cdot \tau_i d(w_1(i), w_2(i)).
$$
Moreover, defining $s_i^{w_1,w_2}=0$ when $\varphi_0(w_1(i))=\varphi_0(w_2(i))$, note that this expression also holds since $P(w_1, w_2, \{i\})=0$. Therefore, summing over a measurable set $A$, we get
$$
P(w_1, w_2, A) \leq \|\varphi_0\|_{\mathrm{Lip}} \sum_{i \in A} s_i^{w_1,w_2} \cdot \tau_i d(w_1(i), w_2(i)).
$$

If we assume as  in previous sections that $\sup_i s_i^{w_1,w_2} \leq K$ for some constant $K$, then the proximity is controlled by the weighted distance
$$
P(w_1, w_2, A) \leq \|\varphi_0\|_{\mathrm{Lip}} \cdot K \sum_{i \in A} \tau_i d(w_1(i), w_2(i)).
$$

This shows that the proximities defined by duality in Section \ref{s4.2} are also dominated by the weighted metric distance $d_\tau$, as discussed in Section \ref{s4.1}.
\end{remark}

% MUY BIEN! LO HE METIDO COMO UN REMARK \textcolor{red}{Notemos que $$s_i^{w_1,w_2}=\frac{P(w_1,w_2,\{i\})}{\tau_i|\varphi_0(w_1(i))-\varphi_0(w_2(i))|}.$$
% Sin embargo, como el $\varphi_0$ viene de Hahn-Banach, no lo conocemos y por tanto tampoco a los $s_i^{w_1,w_2}$.}

% \textcolor{red}{A partir de la anterior igualdad de los $s_i^{w_1,w_2}$ y de la condición de Lipschitz de $\varphi_0$ podemos obtener
% $$s_i^{w_1,w_2}\geq\frac{P(w_1,w_2,\{i\})}{\|\varphi_0\|_{Lip}\tau_id(w_1(i),w_2(i))},$$
% y por tanto
% $$P(w_1,w_2,\{i\})\leq \|\varphi_0\|_{Lip}s_i^{w_1,w_2}\tau_id(w_1(i),w_2(i)).$$
% Consecuentemente,
% $$P(w_1,w_2,A)\leq \|\varphi_0\|_{Lip}\sum_{i\in A}s_i^{w_1,w_2}\tau_id(w_1(i),w_2(i))\leq \|\varphi_0\|_{Lip}K\sum_{i\in A}\tau_id(w_1(i),w_2(i)),$$
% siendo $\sup_i s_i^{w_1,w_2}\leq K$ como se establece en la Proposición 4.4. Por tanto, las proximidades definias por dualidad de la Sección 4.2 están también dominadas por la distancia $d_\tau$ como en la Sección 4.1. ¿Esto ya lo teníamos?}

Let us show how can we define a proximity function starting from the formula provided in Theorem \ref{main1con} for $P$ and the experiments we have developed on the system, that give the set $W_0.$ Since we assume that we know the value of $P$ for all the (pieces of) the walks in $W_0,$ (and only for these walks) we can use its definition to construct a new proximity function for all the walks in the graph. In order to do that, we follow the next steps.

\begin{enumerate}
    \item We define a convenient Lipschitz function $\varphi_0$ for the visited verteces by the walks in $W_0$, such that $\|\varphi_0\|_{Lip}=1$. Then, we extend $\varphi_0$ to all the elements of $V$. To do this, we use a convex combination for a certain $0 \le \alpha \le 1$ of the McShane and Whitney extensions of real Lipschitz functions on metric subspaces, that is given by   
\begin{equation}\label{formula_ext}
    \widehat \varphi(v):= \alpha f^M(v) + (1-\alpha) f^W(v), \ \ v \in G,
\end{equation}
where
\begin{align*}
    f^M(v) &= \sup_{w_0\in W_0, i \in \mathbb N}\{\varphi_0(w_0(i))-  d(v,w_0(i))\} \ \ \text{and} \\
    f^W(v) &= \inf_{w_0\in W_0, i \in \mathbb N}\{\varphi_0(w_0(i))+  d(v,w_0(i))\}.
\end{align*}
This formula preserves the Lipschitz constant of $\varphi_0,$ which is equal to $1.$

\item If the proximity $P$ is known for the walks in $W_0$, we can determine the values of $(s_i^{w_1,w_2})_i$ using the formula in Remark \ref{remal}, once $\varphi_0$ is defined. Then, we find the average value $(s_i)_i=Avg((s^{w_1,w_2}_i)_i)$ of all the sequences $s^{w_1,w_2}_i,$ $w_1,w_2 \in W_0,$ $w_1 \ne w_2$. The election of the average value is up to some point arbitrary; we choose it because it could provide, when used to construct the approximation for $P$ that will be explained below, a similar value as the one of $P$ for the couples of walks in $W_0$. In another case, if we need building $P$ from scratch, we choose $(s_i)_i$ depending on the modelling process.

\item We define the proximity $\widehat P$
for all the (pieces of) walks of the graph by
$$
\widehat P(w_1,w_2,A) = \sum_{i \in A} \tau_i \, s_i \, \big| \widehat \varphi(w_1(i))-  \widehat \varphi(w_2(i)) \big|, \quad w_1,w_2 \in S_1(G), \,\, A \in \Sigma.
$$
We call it the average proximity associated with the original proximity $P,$ that was only known for $W_0.$
%\textcolor{red}{¿Se puede estimar una cota del error entre $P$ y $P$ gorro?} SE PODRÍA PERO YO CREO QUE LO DEJAMOS YA, SI SEGUIMOS CON ESTO SE PUEDE HACER
\end{enumerate}

The next result provides the main properties of the extended proximity, which, due to the way it has been defined, satisfies all the adequacy requirements posed when we first addressed the problem.

\begin{lemma}
The average proximity $\widehat P$ constructed as above satisfies the following properties.
\begin{itemize}

    \item[(i)] $\widehat P$ is an additive $1-$concave proximity with constant $K.$
    
    \item[(ii)] The function $\widehat P,$ considered as a distance-type function, satisfies a Lipschitz inequality with constant $\le K.$

    \item[(iii)] For every $A \in \Sigma,$ $\widehat P(\cdot,\cdot,A)$ is a pseudometric in $W(G).$
    %subadditive in $w_1$ and $w_2.$
    
\end{itemize}
\end{lemma}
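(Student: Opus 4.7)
The plan is to exploit the explicit formula
$$\widehat P(w_1,w_2,A) = \sum_{i \in A} \tau_i \, s_i \, \bigl| \widehat\varphi(w_1(i)) - \widehat\varphi(w_2(i))\bigr|$$
together with two structural facts carried over from the construction: first, that the convex combination $\widehat\varphi = \alpha f^M + (1-\alpha) f^W$ of McShane and Whitney extensions preserves the Lipschitz constant, so $\|\widehat\varphi\|_{Lip} = 1$ and hence $\widehat\varphi \in B_{E(G)}$; second, that the averaged sequence $(s_i)_i$ inherits from each $(s_i^{w_1,w_2})_i$ both the non-negativity and the uniform bound $s_i \le K$ provided by Theorem \ref{main1con}. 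All three claims of the lemma then reduce to routine bookkeeping on this formula.

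For (i), additivity over disjoint $A,B \in \Sigma$ is immediate from splitting the index set of the sum. For the $1$-concavity, I would take any finite family $(w^1_k, w^2_k, A_k)$, stack the defining formulas, pull out the bound $s_i \le K$, and recognize the remaining expression as $K \sum_k \bigl| \langle w^1_k(A_k) \circleddash w^2_k(A_k), \widehat\varphi \rangle \bigr|$. Since $\widehat\varphi \in B_{E(G)}$, this is majorized by taking the supremum over $\varphi \in B_{E(G)}$, which is precisely the $C(B)$-norm appearing in the definition of $1$-concavity. Continuity of $\widehat P$ in $d_\tau$, required to call it a proximity, drops out of (ii). For (ii), the Lipschitz property of $\widehat\varphi$ applied vertex-by-vertex gives $|\widehat\varphi(w_1(i)) - \widehat\varphi(w_2(i))| \le d(w_1(i),w_2(i))$, and combining with $s_i \le K$ yields $\widehat P(w_1,w_2,A) \le K\, d_\tau(w_1(A),w_2(A))$.

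For (iii), symmetry is trivial from the absolute value, and $\widehat P(w,w,A) = 0$ is immediate. The triangle inequality is obtained by applying $|a-b| \le |a-c| + |c-b|$ termwise inside the sum; this step is what forces me to have $s_i \ge 0$, otherwise the termwise inequality would not survive summation with signed coefficients. This non-negativity is the one place where I rely explicitly on Theorem \ref{main1con} (which produces non-negative sequences) and on the fact that averaging preserves non-negativity. The identification of non-distinct walks with zero $\widehat P$-value (so only a pseudometric, not a metric) is expected because $\widehat\varphi$ need not separate points. I do not anticipate a serious obstacle here; the only mild technicality is checking that the McShane–Whitney extension really keeps $\|\widehat\varphi\|_{Lip}=1$, but this is the classical property recalled in the Introduction.
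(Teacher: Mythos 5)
Your proposal is correct and follows essentially the same route as the paper's proof: a direct verification from the explicit formula for $\widehat P$, extracting the bound $\sup_i s_i \le K$ to obtain the $1$-concavity via the $C(B)$-norm, using $\|\widehat\varphi\|_{Lip}=1$ for the Lipschitz estimate against $d_\tau$, and a termwise triangle inequality for the pseudometric property. Your explicit remarks that the $s_i$ must be non-negative for the termwise step and that continuity (needed to call $\widehat P$ a proximity) follows from (ii) are points the paper leaves implicit, but they do not change the argument.
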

\begin{proof}
\begin{itemize}
\item[(i)] The first part is an obvious consequence of the fact that ``the addition of summations over disjoint subsets of indices equals the summation over their union''.
On the other hand, 
for every sequence of couples of walks $(w^1_1,w^2_1), ..., (w^1_n, w^2_n)$ and sets $A_1,...,A_n \in \Sigma,$
\begin{align*}
\sum_{k=1}^n P(w^1_k,w^2_k, A_k) &= \sum_{k=1}^n \sum_{i \in A_k} \tau_i \, s_i \, \big| \widehat \varphi(w^1_k(i))-  \widehat \varphi(w^2_k(i)) \big| \\
&\le \sup_{i} |s_i| \,
\sum_{k=1}^n \sum_{i \in A_k} \tau_i  \big| \widehat \varphi(w^1_k(i))-  \widehat \varphi(w^2_k(i)) \big| \\
&\le K \ \Big\|
 \sum_{k=1}^n \big| \langle w^1_k(A_k) \circleddash w^2_k(A_k), \cdot \rangle \big| \Big\|_{C(B)}.
\end{align*}

\item[(ii)] If $w_1,w_2 \in W(G)$ and $A \in \Sigma,$ we get the Lipschits-type inequality
\begin{align*}
\big| P(w_1,w_2,A) \big| &\le K \Big| \sum_{i \in A} \tau_i  \big| \widehat \varphi(w^1(i))-  \widehat \varphi(w^2(i)) \big| \\
&\le K \sum_{i \in A} \tau_i d(w_1(i), w_2(i)) \\
&= K d_\tau(w_1,w_2).
\end{align*}

\item[(iii)] Indeed, the formula is clearly positive and symmetric with respect to $w_1,w_2.$ On the other hand, if $A \in \Sigma$ and $w_1,w_2, w_3 \in S_1(G),$ we have that
\begin{align*}
\widehat P(w_1,w_3,A) &= \sum_{i \in A} \tau_i \, s_i \, \big| \widehat \varphi(w_1(i))-  \widehat \varphi(w_3(i)) \big| \\
&\le
\sum_{i \in A} \tau_i \, s_i \, \big| \widehat \varphi(w_1(i))-  \widehat \varphi(w_2(i)) \big| 
\\ 
& +
\sum_{i \in A} \tau_i \, s_i \, \big| \widehat \varphi(w_2(i))-  \widehat \varphi(w_3(i)) \big| \\
&= \widehat P(w_1,w_2,A) + \widehat P(w_2,w_3,A).
\end{align*}
\end{itemize}

\end{proof}

\begin{remark}
The way the extension $\widehat \varphi$ is obtained can be changed according to the characteristics of the problem. For instance, we can consider the case that $\varphi$ is known for all the vertices of the graph; in this case, no extension from the range of the walks is needed. 
\end{remark}

The obtained function can be used iteratively to construct an exploration algorithm,  together with a method to classify general walks on the graph, which improves as the number of experiments increases, that is, as knowledge about the system grows. This could be used to design a reinforcement learning algorithm to solve this class of problems.

Let us show in what follows a concrete example of how the proximity function $\widehat P$ obtained as above can be used as a tool for a classification algorithm. Let us formalize the problem we aim to address.

\vspace{0.4cm}

\textbf{Problem.} Given a set of exploratory walks $W_0$ and the associate proximity function $\widehat P,$ construct a classification rule for the set of all walks (paths) connecting $v_1$ and $v_2$ in such a way that each of the obtained groups has as central representative an element of $W_0.$

The algorithm for solving that is straightforward. Once the proximity $\widehat P$ is constructed, we only need to use it as a classifier, that is, to consider for each $w_k \in W_0,$ the set
$$
C_{w_k}= \Big\{ w \in W(G): P(w,w_k) \le P(w,w_j), \, j \ne k \Big\}.
$$
Of course, the sets $\Big\{ C_{w_k}: w_k \in W_0 \Big\}$ obtained in this way are not disjoint but, up to the limit cases that can be assigned using other criteria, the result gives a rule to obtain a partition of $W(G).$

Let us provide an illustrative example of the clustering procedure for walks on graphs. This includes, in particular, the extension step that allows us to propagate an initial evaluation (defined only on a set of explored vertices) to the entire vertex set, including nodes that have not yet been visited. From this extended evaluation, we construct a proximity function $P,$ which we will use to do a comparison (and hence clustering) of both explored and unexplored walks.
To this end, we consider the finite, undirected and connected graph $G$
\begin{center}
\begin{tikzpicture}[
    base/.style   ={circle,draw,fill=green!35,minimum size=8mm}, % base node style
    target/.style ={circle,draw,fill=red!35,minimum size=8mm},   % target node style
    node/.style   ={circle,draw,minimum size=8mm}                % default
]
  % vertices
  \node[base] (1) at ( 0,  2) {1};
  \node[node] (2) at (-1,  1) {2};
  \node[node] (3) at ( 1,  1) {3};
  \node[node] (4) at ( 0,  0) {4};
  \node[node] (5) at (-1, -1) {5};
  \node[node] (6) at ( 1, -1) {6};
  \node[node] (7) at (-2, -2) {7};
  \node[node] (8) at ( 2, -2) {8};
  \node[node] (9) at (-1, -3) {9};
  \node[target] (10) at ( 1, -3) {10};

  % edges
  \path (1) edge (2)
        (1) edge (3)
        (2) edge (4)
        (3) edge (4)
        (2) edge (5)
        (4) edge (6)
        (4) edge (9)
        (5) edge (9)
        (6) edge (8)
        (5) edge (7)
        (7) edge (9)
        (3) edge (8)
        (6) edge (10)
        (9) edge (10)
        (8) edge (10);
\end{tikzpicture}
\end{center}
endowed with the shortest–path distance $d$.
The base vertex is $v_0=1$ (start of the walk) and the target vertex is $v_n=10$ (end of the walk).

Let us assume that only the following three walks have been explored.
\begin{figure}[h]
\centering
\begin{minipage}[t]{0.50\textwidth}
  \vspace{0pt}%
  \begin{align*}
    \color{green!60!black}{w_1} &= (v_1,v_2,v_5,v_9,v_{10},v_{10},\dots),\\
    \color{blue}{w_2} &= (v_1,v_3,v_4,v_9,v_{10},v_{10},\dots),\\
    \color{red}{w_3} &= (v_1,v_3,v_8,v_{10},v_{10},\dots).
    \end{align*}
\end{minipage}\hfill
\begin{minipage}[t]{0.5\textwidth}
  \vspace{0pt}%
  \begin{center}
  \begin{tikzpicture}[scale=0.7, transform shape,
      base/.style   ={circle,draw,fill=green!35,minimum size=8mm},
      target/.style ={circle,draw,fill=red!35,minimum size=8mm},
      node/.style   ={circle,draw,minimum size=8mm},
      w1/.style     ={red, very thick, -stealth},
      w2_0/.style   ={blue, very thick},
      w2/.style     ={blue, very thick, -stealth},
      w3/.style     ={green!60!black, very thick, -stealth}
  ]
    % vertices
    \node[base]   (1)  at ( 0,  2) {1};
    \node[node]   (2)  at (-1,  1) {2};
    \node[node]   (3)  at ( 1,  1) {3};
    \node[node]   (4)  at ( 0,  0) {4};
    \node[node]   (5)  at (-1, -1) {5};
    \node[node]   (6)  at ( 1, -1) {6};
    \node[node]   (7)  at (-2, -2) {7};
    \node[node]   (8)  at ( 2, -2) {8};
    \node[node]   (9)  at (-1, -3) {9};
    \node[target] (10) at ( 1, -3) {10};

    % underlying graph edges
    \path (1) edge (2)
          (1) edge (3)
          (2) edge (4)
          (3) edge (4)
          (2) edge (5)
          (4) edge (6)
          (4) edge (9)
          (5) edge (9)
          (6) edge (8)
          (5) edge (7)
          (7) edge (9)
          (3) edge (8)
          (6) edge (10)
          (9) edge (10)
          (8) edge (10);

    % w1 = (v1,v3,v8,v10)
    \draw[w1] (1) -- (3) -- (8) -- (10);

    % w2 = (v1,v3,v4,v9,v10)
    \draw[w2_0] (1) to[bend left=10] (3);
    \draw[w2]    (3) -- (4) -- (9)
                 to[bend right=10] (10);

    % w3 = (v1,v2,v5,v9,v10)
    \draw[w3] (1) -- (2) -- (5) -- (9)
               to[bend left=10] (10);
  \end{tikzpicture}
  \end{center}
\end{minipage}
\end{figure}
Clearly, each sequence is eventually constant at $v_7$ and belongs to $W(G)$.

Based on the set of visited nodes, namely 
$$
V_{explored}= \{v_1,v_2,v_3,v_4,v_5,v_8,v_9,v_{10}\},
$$
we define a preliminary evaluation function$$
\varphi_{0}(v) := d(v,v_{10})-d(v_1,v_{10})=d(v,v_{10})-3,
$$
which satisfies that it belongs to $E(G)$, because $\varphi_{0}(v_1)=0$ and $\mathrm{Lip}(\varphi_{0})=1.$

The values of $\varphi_0$ on the explored nodes are as follows:
\begin{align*}
&\varphi_{0}(v_1)=\varphi_{0}(v_2)=0,\\
&\varphi_{0}(v_3)=\varphi_{0}(v_4)=\varphi_{0}(v_5)=-1,\\
&\varphi_{0}(v_8)=\varphi_{0}(v_9)=-2,\\
&\varphi_{0}(v_{10})=-3. 
\end{align*}
At this stage, the vertices $v_6$ and $v_7$ remain unexplored, and therefore the function $\varphi_0$  has not yet been extended to them.

To define an extension over the entire vertex set, we consider the mean of the McShane–Whitney formulas given by
$$
\widehat{\varphi}(v) := \frac{1}{2}\left(f_M(v)+f_W(v)\right), \ \ v \in V.
$$

Let us now evaluate this extension at the unexplored vertex $v_6.$
\begin{align*}
d(v_6,v_1)=3,\; d(v_6,v_2)=2,\; \dots \Longrightarrow 
     f_M(v_6)&=\max\{-3,-2,\dots\}=-2, \\
                     \phantom{\Longrightarrow}
     f_W(v_6)&=\min\{3,2,\dots\}=-1, \\
                     \phantom{\Longrightarrow}
     \widehat{\varphi}(v_6)&=-\frac{3}{2}.
\end{align*}
A similar computation shows that the value at $v_7$ is also $\widehat{\varphi}(v_7)=-\frac{3}{2}.$

We define the proximity function between arbitrary walks $w^1,w^2\in S_1(G)$ by
$$
P(w^1,w^2)
:=\sum_{i=1}^{\infty}\frac{1}{2^i}\,
   \bigl|\widehat{\varphi}\bigl(w^1(i)\bigr)-\widehat{\varphi}\bigl(w^2(i)\bigr)\bigr|.
$$
where we fix geometric weights $\tau_i = 2^{-i}$ and constant coefficients $s_i=1$ for all $i\geq 1.$
%\textcolor{red}{Lo que se hace en el ejemplo es lo contrario a lo que explicamos en la sección 4 y la introducción de la 5. Hasta ahora partimos de una proximidad $P$ conocida y queremos ver cómo se representa. Ahora la estamos definiendo nosotros a partir de un representante $s_i$ y una función $\varphi_0$. Habría que explicar mejor esta parte.}
Consider the following unexplored walks
\begin{figure}[h]
\centering
\begin{minipage}[t]{0.50\textwidth}
  \vspace{0pt}%
  \begin{align*}
    \color{orange}{w_4} &= (v_1,v_2,v_5,v_7,v_9,v_{10},v_{10},\dots),\\
    \color{purple}{w_5} &= (v_1,v_3,v_4,v_6,v_{10},v_{10},\dots),\\
    \color{teal}{w_6} &= (v_1,v_3,v_8,v_6,v_{10},v_{10},\dots).
  \end{align*}
\end{minipage}\hfill
\begin{minipage}[t]{0.5\textwidth}
  \vspace{0pt}%
  \begin{center}
  \begin{tikzpicture}[scale=0.7, transform shape,
      base/.style   ={circle,draw,fill=green!35,minimum size=8mm},
      target/.style ={circle,draw,fill=red!35,minimum size=8mm},
      node/.style   ={circle,draw,minimum size=8mm},
      w4/.style     ={orange, very thick, -stealth},
      w5_0/.style   ={purple, very thick},
      w5/.style     ={purple, very thick, -stealth},
      w6/.style     ={teal, very thick, -stealth}
  ]
    % vertices
    \node[base]   (1)  at ( 0,  2) {1};
    \node[node]   (2)  at (-1,  1) {2};
    \node[node]   (3)  at ( 1,  1) {3};
    \node[node]   (4)  at ( 0,  0) {4};
    \node[node]   (5)  at (-1, -1) {5};
    \node[node]   (6)  at ( 1, -1) {6};
    \node[node]   (7)  at (-2, -2) {7};
    \node[node]   (8)  at ( 2, -2) {8};
    \node[node]   (9)  at (-1, -3) {9};
    \node[target] (10) at ( 1, -3) {10};

    % underlying graph edges
    \path (1) edge (2)
          (1) edge (3)
          (2) edge (4)
          (3) edge (4)
          (2) edge (5)
          (4) edge (6)
          (4) edge (9)
          (5) edge (9)
          (6) edge (8)
          (5) edge (7)
          (7) edge (9)
          (3) edge (8)
          (6) edge (10)
          (9) edge (10)
          (8) edge (10);

    % w4 = (v1,v2,v5,v7,v9,v10)
    \draw[w4] (1) -- (2) -- (5) -- (7) -- (9) -- (10);

    % w5 = (v1,v3,v4,v6,v10)
    \draw[w5_0] (1) to[bend left=10] (3);
    \draw[w5]    (3) -- (4) -- (6)
                to[bend right=10] (10);

    % w6 = (v1,v3,v8,v6,v10)
    \draw[w6]    (1) -- (3) -- (8) -- (6)
                to[bend left=10] (10);
  \end{tikzpicture}
  \end{center}
\end{minipage}
\end{figure}

Since our aim is to classify new walks based on their similarity to previously explored ones, we use the set $\{w_1,w_2,w_3\}$ as reference representatives. The proximity of each unexplored walk to these reference elements, computed via the function $P,$ can be seen in the next table.
\begin{center}
\begin{tabular}{@{}l|ccc@{}}
\toprule
$P(\cdot, \cdot)$ & $w_1$ & $w_2$ & $w_3$ \\
\midrule
$w_4$ & \color{blue}{0.062} & 0.312 & 0.5 \\
$w_5$ & 0.281 & \color{blue}{0.031} & 0.219 \\
$w_6$ & 0.406 & 0.156 & \color{blue}{0.094} \\
\bottomrule
\end{tabular}
\end{center}

\vspace{3mm}
Hence, the assignments induced by the proximity function $P$ is
$$
w_4 \sim w_1, \qquad w_5 \sim w_2, \qquad w_6 \sim w_3.
$$

\section{Potential applications: proximity-guided exploration in reinforcement learning}

As discussed in the Introduction, the results presented in this article may be applied to the development of reinforcement learning algorithms on graphs, a topic that has recently gained considerable attention (see, e.g., \cite{jac,mei,nie}). The suggested procedure involves integrating the proximity function $\widehat{P}$ as a mechanism to guide exploration on the space of walks. While the use of $\widehat{P}$ that we have explained has focused on classification and metric analysis, its geometric interpretation and ability to quantify similarity between walks suggest a broader applicability, particularly in settings where the reward landscape is sparse or the graph structure is too large for exhaustive exploration.

We propose an improvement to the exploration strategy used in reinforcement learning algorithms that incrementally construct walks within graph-based environments. Traditionally, these algorithms alternate between exploitation (choosing the next node to maximize an estimated reward) and exploration (randomly selecting a new node). The novelty lies in replacing random exploration with a proximity-guided strategy using a function $\widehat{P}$. Instead of sampling uniformly, the agent compares potential path extensions to a reference set of high-reward walks, prioritizing those that are most similar in structure. This approach introduces a more informed, data-driven method for exploration, focusing on areas of the graph that resemble previously successful trajectories.

This procedure offers several advantages. It remains model-free and is compatible with Lipschitz-based value function approximations, while also taking advantage of the geometric structure of the walk space induced by the proximity function. As the agent collects more data, it dynamically refines its understanding of similarity (via the use of a proximity), enabling adaptive and more efficient exploration. Although further research is needed to formalize the method and evaluate its performance across various graph and reward structures, proximity-based exploration could help for improving sample efficiency and scalability in RL on combinatorial and graph-structured domains.

\section*{Acknowledgements}
This research was funded by the Agencia Estatal de Investigaci\'on, grant number PID2022-138342NB-I00.

Also, by the European Union’s Horizon Europe research and innovation program under the Grant Agreement No. 
101059609 (Re-Livestock).
The second author was supported by a contract of the Programa de Ayudas de Investigación y Desarrollo (PAID-01-24), Universitat Politècnica de València.

%% For numbered reference style
%% \bibitem{label}
%% Text of bibliographic item

%\bibitem{lamport94}
%  Leslie Lamport,
%  \textit{\LaTeX: a document preparation system},
%  Addison Wesley, Massachusetts,
%  2nd edition,
%  1994.

%\end{thebibliography}
\end{document}